\def\mathlette#1#2{{\mathchoice{\mbox{#1$\displaystyle #2$}}%
		{\mbox{#1$\textstyle #2$}}%
		{\mbox{#1$\scriptstyle #2$}}%
		{\mbox{#1$\scriptscriptstyle #2$}}}}
\newcommand{\matr}[1]{\mathlette{\boldmath}{#1}}
\newcommand{\RR}{\mathbb{R}}
\newcommand{\NN}{\mathbb{N}}
\newtheorem{theorem}{Theorem}
\newtheorem{proposition}{Proposition}
\newtheorem{lemma}{Lemma}
\newtheorem{corollary}{Corollary}
\newtheorem{remark}{Remark}
\newtheorem{assumption}{Assumption}
\newtheorem{definition}{Definition}
\providecommand{\sd}{K}
\providecommand{\inprod}[1]{\langle#1\rangle}
\providecommand{\bydef}{\stackrel{\Delta}{=}}
\providecommand{\cref}[1]{Chapter~\ref{chap:#1}}
\providecommand{\R}{\ensuremath{\mathbb{R}}}
\providecommand{\N}{\ensuremath{\mathbb{N}}}
\newcommand{\Op}[1]{\mathcal{O}(#1)}
\providecommand{\GS}[2]{\mathcal{GS}(#1 \mid #2)}
\newcommand{\av}{{\matr a}}
\newcommand{\bv}{{\matr b}}
\newcommand{\ev}{{\matr e}}
\newcommand{\fv}{{\matr f}}
\newcommand{\gv}{{\matr g}}
\newcommand{\hv}{{\matr h}}
\newcommand{\lv}{{\matr  l}}
\newcommand{\mv}{{\matr m}}
\newcommand{\rv}{{\matr r}}
\newcommand{\uv}{{\matr u}}
\newcommand{\vv}{{\matr v}}
\newcommand{\xv}{{\matr x}}
\newcommand{\yv}{{\matr y}}
\newcommand{\zv}{{\matr z}}
\newcommand{\Am}{{\matr A}}
\newcommand{\Gm}{{\matr G}}
\newcommand{\Id}{{\matr I}}
\newcommand{\Pm}{{\matr P}}
\newcommand{\Xm}{{\matr X}}
\newcommand{\Zm}{{\matr Z}}
\begin{document}
	\title{A Convergence Analysis of Approximate Message Passing with Non-Separable Functions \\
		and Applications to Multi-Class Classification} 
	


	\author{%
		\IEEEauthorblockN{Burak \c{C}akmak\IEEEauthorrefmark{1},
			Yue M. Lu\IEEEauthorrefmark{2},
			Manfred Opper\IEEEauthorrefmark{1}\IEEEauthorrefmark{3}\IEEEauthorrefmark{4}}
		\IEEEauthorblockA{\IEEEauthorrefmark{1}%
			Technical University of  Berlin,
			Berlin 10587, Germany,
			\{burak.cakmak, manfred.opper\}@tu-berlin.de}
		\IEEEauthorblockA{\IEEEauthorrefmark{2}%
			Harvard University, Cambridge, MA 02138, USA,
			yuelu@seas.harvard.edu}
		
		\IEEEauthorblockA{\IEEEauthorrefmark{3}%
			University of Birmingham, Birmingham
			B15 2TT, United Kingdom} 
		\IEEEauthorblockA{\IEEEauthorrefmark{4}%
			University of Potsdam, Potsdam 14469, Germany} 
	}

	\maketitle

	\begin{abstract}
		Motivated by the recent application of approximate message passing (AMP) to the analysis of convex optimizations in multi-class classifications [Loureiro, et. al., 2021], we present a convergence analysis of AMP dynamics with non-separable multivariate nonlinearities. As an application, we present a complete (and independent) analysis of the motivated convex optimization problem.
	\end{abstract}
	
	\section{Introduction}
	The analysis of the statistical properties of high-dimensional random convex optimization
	problems is a very active topic in information theory, statistics, machine learning \cite{Kabashima2009typical,Bayati11,
		vehkapera2016analysis,Christos18,Cedric23,thrampoulidis2020theoretical,thrampoulidis2015regularized,loureiro2021learning,celentano2022fundamental,
		cornacchia2022learning}. 
	The interest in studying such problems is motivated by the fact that they can serve as models
	for large-scale parameter estimation in statistical machine learning
	and communication theory, where the data are the source of randomness in the models.
	The  asymptotic limit considered in such models, 
	where both the number of data instances and the number of model parameters grow large (while their ratio is
	fixed), makes the theoretical analysis non-trivial.
	
	Research on this topic was dominated for a long time by non--rigorous approaches, such as the replica method, developed in the field of statistical physics \cite{Kabashima2009typical,Rangan_2012,vehkapera2016analysis}. In recent years, a  variety of models have been treated rigorously by methods of high--dimensional probability \cite{Bayati11,Christos18,celentano2022fundamental}.
	
	A promising, but somewhat indirect approach derives properties of the \textit{static} optimization problems by
	studying the \textit{dynamics} of a class of algorithms which are constructed to converge to their solutions. They are usually known as approximate message passing (AMP) style algorithms. 
	For high-dimensional problems the dynamics of individual nodes in the AMP algorithms 
	can often be asymptotically decoupled, leading to an effective equivalent stochastic dynamics for
	a single node. The high--dimensional dynamical problem is 
	replaced by the time evolution (usually termed ``state evolution'') of a finite number of deterministic ``order parameters'' which converge
	to the corresponding order parameters describing the static properties of the optimization
	problem.  The necessary convergence properties of AMP--style algorithms for cases
	where dynamical nodes consist of \textit{scalar} random variables 
	are often obtained in a relatively simple way from the 
	contraction properties of a one-dimensional nonlinear mapping which governs the dynamics of order parameters. In such cases, parameter values of the algorithm which separate regions of (local) 
	convergence from a divergent behavior can usually be shown to be related to the so-called
	\textit{de Almeida--Thouless} (AT) \cite{AT} stability criterion of the static problem \cite{Bolthausen,Opper16,ccakmak2022analysis,takahashi2022macroscopic}. 
	
	We study AMP-style algorithms with non-separable functions\cite{javanmard2013state, Berthier20,zhong2021approximate,gerbelot2021graph}
	(i.e., the so-called denoiser function is vector-valued and has non-separable nonlinearities over indices) and their application to 
	the analysis of convex optimization problems. These more complex models have become increasingly interesting, because of their practical  
	relevance ranging from multi-class classification in machine learning \cite{thrampoulidis2020theoretical,loureiro2021learning,cornacchia2022learning} to channel estimation in communication theory \cite{fengler,ccakmak2024inference}.
	
	The analysis of the asymptotic convergence properties in the high--dimensional
	limit is less straightforward. The order parameters in the state evolution are matrices for which contraction properties are not easily obtained.
	While recent works \cite{loureiro2021learning,cornacchia2022learning} show promising results in this direction, we conjecture that such results may not be complete. E.g., explicit (AT style) stability criteria have not been obtained so far.
	
	Our main contribution is to leverage the convexity properties of the state evolution mapping and present a complete (and novel) analysis of the convergence properties of the AMP dynamics involving non-separable functions. It reveals that a simplified approach relying solely on monotonicity 
	properties of the mapping, akin to the one employed by Loureiro et al. (2021) \cite{loureiro2021learning}, is insufficient to establish convergence (see Remark~\ref{remforkrazakala}).  Also, our analysis does not rely on several model assumptions made in \cite{loureiro2021learning,cornacchia2022learning}, such as the uniqueness of the fixed-point solution in the state evolution and the boundedness of the observation vectors.
	
	\subsubsection*{Organization} In Section~\ref{Mainresults}, we present a contraction mapping analysis of AMP dynamics with non-separable functions. Section~\ref{Sec_Application} applies this analysis to convex optimization. Section~\ref{Simulations} provides numerical illustrations of AT-type stability. Conclusions are in Section~\ref{Conclusion}. 
	Proofs of intermediate results are in the appendix.
	\vspace{-0.1cm}
	\subsection{Notations}\label{NotDef}
	We use the index set notation, $[N] \bydef \{1, \ldots, N\}$. The scaling parameters in the paper are $d$ and $n$ while their ratio $\alpha\bydef n/d$ is fixed as $d, n \to \infty$. We write $m\asymp d$ to imply that the ratio $m/d$ is fixed as $d,m\to \infty$. Throughout  the paper, $\sd, T\in \N$ are fixed (w.r.t. $d$). For $m \asymp d$, we use bold-faced lower case letters, e.g., $\av, \bv$, to denote $m\times K$ matrices, whose (normalized) inner product is defined as
	\begin{equation}\label{eq:nip}
		\inprod{\matr a,\matr b} \bydef \frac 1  m \matr a^\top\matr b\;.
	\end{equation}
	The $k$th column and the $i$th row of $\matr a\in \RR^{m\times K}$ are denoted by $\matr a_k\in \RR^{m\times 1}$ and ${a}^{[i]}\in \RR^{1\times K}$, respectively. We use calligraphic letters, e.g. $\mathcal A$, for $\sd\times \sd$ matrices and $\mathcal I$ stands for the $\sd\times \sd$ identity matrix. $\mathcal A> \matr 0$ (resp.  $\mathcal A \geq \matr 0$) implies that $\mathcal A$ is a positive definite (resp. positive semi-definite) matrix. 
	
	The multivariate Gaussian distribution (resp. density function of $\xv$) with mean $\matr \mu$ and covariance matrix $\matr \Sigma$ is denoted by $\mathcal N(\matr \mu, \matr\Sigma)$ (resp. $\textswab{g}(\xv\vert\matr \mu,\matr \Sigma)$). $A\sim \rm P$ indicates that a random variable or random vector (rv or RV) $A$ has a distribution~$\rm P$. We use the notation $\av \sim_{\text{i.i.d.}} A$ to indicate that the rows of the vector $\av$ are independent and identically distributed (i.i.d.) copies of the RV $A$.  We may also indicate this as $\av \sim_{\text{i.i.d.}} {\rm P}$ where ${\rm P}$ denotes the distribution of $A$.
	

	\subsubsection*{Concentrations with $\mathcal L^p$ norms} 
	Let $\kappa_d>0$ be a deterministic sequence indexed by $d$, e.g., $\kappa_d= d^{-\frac 1 2}$ or $\kappa_d=1$. We~write
	\[A= \mathcal{O}(\kappa_d) \]
	to imply that for each $p\in \NN$ there is a constant $C_p$ such that $\Vert A \Vert_{\mathcal L^p}\leq C_p\kappa_d$ with $\Vert A\Vert_{\mathcal L^p}\bydef (\mathbb E\vert A\vert^p)^\frac {1}{p}$.
	We say $\hat{\matr a}$ is a high-dimensional equivalent of $\matr a$, denoted by
	\begin{equation}\label{eq:equiv}
		\hat{\matr a}\simeq \matr a~~\text{if}~~ \Vert\hat{\matr a} -\matr a\Vert_{\texttt{F}}= \Op{1}.
	\end{equation}
	E.g., if $\hat{\matr a}\simeq \matr a$, then for any \emph{small} constant $c>0$ we have the almost sure convergence as $d\to \infty$
	\begin{equation}
		\frac{1}{d^{c}}\Vert \hat{\matr a}-\matr a \Vert_{\texttt{F}}=\Op{d^{-c}}   \overset{a.s.}{\rightarrow} 0. \label{ascon}
	\end{equation}
	Indeed, the definition of $\mathcal{O}(d^{-c})$ and Markov's inequality yield $\mathbb{P}(|\mathcal{O}(d^{-c})| \geq \epsilon) \leq \frac{C_p^p}{\epsilon^p} d^{-cp}$ for some $\epsilon > 0$ and $p \in \mathbb{N}$ and choosing $p>\frac 1 c$ leads from Borel-Cantelli's lemma to \eqref{ascon}. 
	
	\section{Analysis of the Contraction Mapping}\label{Mainresults}
	We consider the problem of reconstructing of an unknown matrix $\matr \omega_0\in \RR^{d\times K}$ from an observation matrix $\matr y\in \RR^{n\times K}$ which is generated according to a log-likelihood function
	\begin{equation}
		\ln p(\matr y\vert\matr\omega_0,\matr X)=\sum_{1\leq i\leq n} \ln p_{0}(y^{[i]}\vert\matr x_i^\top\matr \omega_0 )\label{model1}
	\end{equation}
	where $p_0(y\vert \theta)$ denotes \emph{generating} likelihood function and $\matr x_i^\top$ denotes the $i$th row of the weight matrix $\matr X\in \RR^{n\times d}$. As a concrete application, we will later consider the reconstruction of $\matr \omega_0$, denoted by $\matr \omega^\star$, by the following convex optimization
	\begin{equation}
		\matr \omega^\star=\underset{\matr \omega }{\arg\min}~\sum_{1\leq i\leq n} l(\matr x_i^\top\matr \omega; y^{[i]}) +
		\frac{\lambda_0}{2}\Vert \matr\omega \Vert_\texttt{F}^2 \label{opt0}
	\end{equation}
	with a convex loss function $l(\theta;y)$ (w.r.t. $\theta$) and a regularization constant $\lambda_0>0$. 
	The optimization \eqref{opt0}  can be interpreted as the maximum a posterior estimation of $\matr \omega_0$ 
	of a Bayesian inference approach
	with the \emph{assumed} prior $\matr \omega_0\sim_\text{i.i.d.} \mathcal{N}(\matr 0, \mathcal I/{\lambda_0})$ and the \emph{assumed} likelihood $p(y\vert \theta)\propto \exp(-l(\theta;y))$. 
	
	Similar to \cite{loureiro2021learning},  we consider a generic AMP algorithm to solve the optimization problem. 
	It is defined for $t\in[T]$ by the dynamics 
	\begin{subequations}\label{alg2}
		\begin{align}
			\matr \gamma^{(t)}&=\matr X{\matr \omega}^{(t)}- \matr f^{(t-1)}\label{alg21}\\
			\matr f^{(t)}&=f(\matr \gamma^{(t)}; \matr y)\label{alg22}\\
			\matr \omega^{(t+1)}&= \matr X^\top\matr f^{(t)}-\alpha{\matr \omega}^{(t)}{\mathcal Q}^{(t)}\;.\label{alg23}
		\end{align}
	\end{subequations}
	Here, $f(\matr \gamma;\matr y)$ is an $n\times K$ matrix with its $i$th row denoted by $f(\gamma^{[i]};y^{[i]})$ and $f(\cdot;\cdot): (\RR^{K}\times \RR^{K})\to \RR^{K}$ is an appropriately chosen vector-valued function such that the fixed-point of ${\matr \omega}^{(t)}$ coincides with the reconstruction of the unkown, i.e., $\matr \omega^\star$. 
	
	Our goal is to analyze the distances between AMP dynamical variables with a fixed number of memory-step differences (e.g., $\sqrt{d^{-1}}\Vert \matr \omega^{(t+\tau)}-\matr \omega^{(t)}\Vert_{\texttt{F}}$ for $\tau\geq1$) and show if these contract as $t$ increases in the high--dimensional limit $d\to\infty$.
	The dynamics is
	initialized from a randomly perturbed weight vector $\matr \omega_0$. To be specific, 
	we set $\matr f^{(0)}=\matr 0$ and 
	\begin{equation}
		\matr \omega^{(1)}=\matr r_0\mathcal B^{(1)}+\matr u^{(0)}\sqrt{\mathcal C^{(1,1)}}\label{initial}\;.
	\end{equation}
	Here $\matr u^{(0)}\sim_{\text{i.i.d}}\mathcal N(\matr 0,\mathcal I)$ is an \emph{arbitrary} (i.e., independent of everything else constructed so far) random matrix and $\mathcal B^{(1)},\mathcal C^{(1,1)}\in \RR^{K\times K}$ are some deterministic matrices with $\mathcal C^{(1,1)}\geq\matr 0$ and $\rv_0$ is from the QR decomposition
	\begin{equation}
		\matr w_0=\matr r_0\langle\matr r_0,\matr \omega_0 \rangle \quad \text {s.t.}\quad \langle  \matr r_0,\matr r_0 \rangle =\mathcal I. \label{qr}
	\end{equation}
	Note that this choice of initialization contains information about the true matrix $\matr \omega_0$ (through $\rv_0$). A fully random initialization corresponds to setting $\mathcal B^{(1)} = \matr 0$. 
	
	While the AMP dynamics \eqref{alg2} describes a nonlinear system of nodes coupled by the random matrix $\matr X$, one can show that (with an appropriate choice of ${\mathcal Q}^{(t)}$, see below) the dynamics of the nodes decouple as $d\to \infty$ and can be rewritten in terms of i.i.d. stochastic processes described by the state-evolution:
	\begin{definition}[State Evolution]\label{SE}
		Let $\{\Psi^{(t)}\in \R^{1\times K}\}_{t\in [T+1]}$ is a zero-mean Gaussian process with the two-time covariances $\mathcal C^{(t,s)}\bydef \mathbb E[(\Psi^{(t)})^\top\Psi^{(s)}]$ for all $t,s\in [T+1]$ recursively constructed as
		\begin{align}
			\mathcal B^{(t+1)}&=\alpha \mathbb E[G_0^\top f(\Gamma^{(t)};Y)]-\alpha\mathcal B^{(t)}\mathbb E[f'(\Gamma^{(t)};Y)]\label{state2}\\
			\mathcal C^{(t+1,s+1)}&=\alpha\mathbb E[f(\Gamma^{(t)};Y)^\top
			f(\Gamma^{(s)};Y)]\label{state1}
		\end{align}
		where $\mathcal C^{(1,t)}\bydef\mathcal C^{(1,1)}\delta_{t1}$ for all $t\geq 1$ with $\delta_{ts}$ denoting the Kronecker delta and we have introduced random vectors $\Gamma^{(t)}\bydef G_0\mathcal B^{(t)}+ \Psi^{(t)}$ with  $\Psi^{(t)}$ being independent of  
		\begin{equation}
			(Y,G_0)\sim p_0(Y\vert G_0\sqrt{\mathcal C_0})\textswab{g}(G_0\vert 0,\mathcal I)~~\text{with}~~\mathcal C_0\bydef \langle\matr \omega_0,\matr \omega_0 \rangle\;.\label{YG0}
		\end{equation}
		Finally, we have set $
		\mathcal Q^{(t)}\bydef\mathbb E[f'(\Gamma^{(t)};Y)]$ where $f'(\gamma;\cdot)$ denotes  the $K\times K$ Jacobian of $f(\gamma;\cdot)$ w.r.t. $\gamma$, with the entries 
		$[f'(\gamma;\cdot)]_{kk'}=\frac{\partial [f(\gamma;\cdot)]_{k'}}{\partial \gamma_k}$ for any $k,k\in[K]$.
	\end{definition}
	\begin{proposition}[Decoupling Principle]\label{Th1} 
		Let $\matr X\sim_{\text{i.i.d.}}\mathcal N(\matr 0,\matr I/d)$. Let $f(\gamma;y)$ be differentiable and Lipschitz continuous w.r.t $\gamma$ and $f(0;Y)=\Op{1}$ where $Y$ as in \eqref{YG0}.
		Define $\matr g_{0}\bydef \matr X\matr r_{0}$. Then,  we have for any $t\in[T]$ 
		\begin{align}
			\matr \gamma^{(t)}&\simeq  \matr g_0\mathcal B^{(t)}+\tilde{\matr \psi}^{(t)}\label{Gamma11}\\
			\matr \omega^{(t+1)}&\simeq \matr r_0\mathcal B^{(t+1)}+\matr \psi^{(t+1)}\label{Omega11},
		\end{align}
		where the two sequences $\{\tilde{\matr \psi}^{(t)}\}_{t\in[T]}$ and $\{{\matr \psi}^{(t+1)}\}_{t\in[T]}$ are independent with $\tilde{\matr \psi}^{(t)}\sim
		_{\text{i.i.d.}}\Psi^{(t)}$ and $\matr \psi^{(t)}\sim_{\text{i.i.d.}}\Psi^{(t)}$ with the stochastic process  $\Psi^{(t)}$ as in Definition~\ref{SE}.
	\end{proposition}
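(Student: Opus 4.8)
The plan is to follow the conditioning technique of Bolthausen \cite{Bolthausen}, in the asymmetric-AMP form of Bayati and Montanari \cite{Bayati11} and its non-separable extensions \cite{javanmard2013state,Berthier20,gerbelot2021graph}. Since $f$ acts row-wise in \eqref{alg22} but couples the $K$ class-components of each row, the only genuinely new feature is that every node carries a $K$-vector and every state-evolution coefficient in Definition~\ref{SE} is a $K\times K$ matrix, so one must carry these matrices on the correct side throughout. I would prove \eqref{Gamma11} and \eqref{Omega11} \emph{together} by induction on $t$, advancing the recursion \eqref{alg2} one half-step at a time: from a representation of $\matr\omega^{(t)}$ I get one of $\matr\gamma^{(t)}$ via \eqref{alg21}; then, since $f(\cdot;\cdot)$ is Lipschitz and applied row-wise, a law of large numbers over the $n$ rows (in the $\mathcal L^p$ sense of Section~\ref{NotDef}) turns the empirical inner products of $\matr f^{(t)}=f(\matr\gamma^{(t)};\matr y)$ into their state-evolution values; and \eqref{alg23} then yields the representation of $\matr\omega^{(t+1)}$.

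First I would fix the $\sigma$-algebra $\mathcal S_t$ generated by $\matr r_0,\matr\omega_0,\matr u^{(0)},\matr g_0,\matr y$ and all of $\matr\omega^{(1)},\dots,\matr\omega^{(t)}$, $\matr\gamma^{(1)},\dots,\matr\gamma^{(t)}$, $\matr f^{(0)},\dots,\matr f^{(t-1)}$. Conditionally on $\mathcal S_t$, the i.i.d. Gaussian $\matr X$ is constrained only by the linear relations $\matr X\matr r_0=\matr g_0$, $\matr X\matr\omega^{(s)}=\matr\gamma^{(s)}+\matr f^{(s-1)}$ ($s\le t$), and $\matr X^\top\matr f^{(s)}=\matr\omega^{(s+1)}+\alpha\matr\omega^{(s)}\mathcal Q^{(s)}$ ($s<t$), so its conditional law is that of an orthogonal projection onto this affine set plus an independent Gaussian acting on the complementary row- and column-spaces. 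Decomposing $\matr X\matr\omega^{(t)}$ (resp.\ $\matr X^\top\matr f^{(t)}$) along this split produces a \emph{memory} term --- an asymptotically deterministic $K\times K$-matrix combination of $\{\matr g_0,\tilde{\matr\psi}^{(s)}\}_{s<t}$ (resp.\ $\{\matr r_0,\matr\psi^{(s)}\}_{s\le t}$) whose coefficients are Gram-type inner products of the constraint directions --- plus a \emph{fresh} Gaussian term that I would identify with $\tilde{\matr\psi}^{(t)}$ (resp.\ $\matr\psi^{(t+1)}$), its covariances matched to \eqref{state1} by the induction hypothesis and the $\mathcal O(\cdot)$-calculus.

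The decisive point is the Onsager cancellation: by Gaussian integration by parts (Stein's identity) applied to $\matr f^{(t)}=f(\matr g_0\mathcal B^{(t)}+\tilde{\matr\psi}^{(t)};\matr y)$, the self-memory coefficient multiplying $\matr\omega^{(t)}$ in the decomposition of $\matr X^\top\matr f^{(t)}$ is asymptotically $\alpha\,\mathbb E[f'(\Gamma^{(t)};Y)]=\alpha\mathcal Q^{(t)}$, so the term $-\alpha\matr\omega^{(t)}\mathcal Q^{(t)}$ in \eqref{alg23} removes it exactly; the surviving memory terms recombine --- again via Stein's identity and the recursion \eqref{state2} --- into the drift $\matr r_0\mathcal B^{(t+1)}$, closing \eqref{Omega11}. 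The analogous but Onsager-free half-step for $\matr X\matr\omega^{(t)}$ gives the drift $\matr g_0\mathcal B^{(t)}$ and closes \eqref{Gamma11}. The base case $t=1$ is immediate from \eqref{initial}: $\matr\gamma^{(1)}=\matr g_0\mathcal B^{(1)}+\matr X\matr u^{(0)}\sqrt{\mathcal C^{(1,1)}}$, where $\matr X\matr u^{(0)}$ conditioned on $\matr u^{(0)}$ has i.i.d. rows $\mathcal N(\matr 0,\langle\matr u^{(0)},\matr u^{(0)}\rangle)$ with $\langle\matr u^{(0)},\matr u^{(0)}\rangle\simeq\mathcal I$, and is asymptotically independent of $\matr g_0$ because $\langle\matr u^{(0)},\matr r_0\rangle\simeq\matr 0$; moreover $\langle\matr r_0,\matr r_0\rangle=\mathcal I$ makes $\matr g_0\sim_{\text{i.i.d.}}\mathcal N(\matr 0,\mathcal I)$, and the pair $(\matr g_0,\matr y)$ has the law \eqref{YG0} because $\matr X\matr\omega_0=\matr g_0\langle\matr r_0,\matr\omega_0\rangle$ with $\langle\matr r_0,\matr\omega_0\rangle^\top\langle\matr r_0,\matr\omega_0\rangle=\mathcal C_0$ by \eqref{qr}.

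I expect the main obstacle to be the algebra of the $K\times K$ coefficients: keeping the matrices in the right order through the projection/Gram-matrix manipulations and the multivariate Stein identity, and --- more seriously --- ensuring that the Gram matrices of the constraint directions stay uniformly non-degenerate so that their inverses are $\Op{1}$-bounded, since a near-collinearity of a new direction with the span of the old ones (equivalently, degeneracy of a covariance $\mathcal C^{(s,s')}$-block) would break the decomposition and would have to be absorbed by a perturbation argument. Once these are in hand, propagating the $\Op{1}$ Frobenius errors through the Lipschitz map $f$ and the finitely many (as $T$ is fixed) algebraic operations is routine, using only that sums and products of $\mathcal O(\kappa_d)$ quantities are again $\mathcal O(\cdot)$. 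Alternatively, one could try to deduce \eqref{Gamma11}--\eqref{Omega11} from the general non-separable state-evolution results of \cite{Berthier20,gerbelot2021graph} after casting \eqref{alg2} together with the initialization \eqref{initial}--\eqref{qr} into their abstract framework.
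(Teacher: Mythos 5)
Your argument is sound and would establish the proposition, but it is not the route the paper takes. You run the classical Bolthausen--Bayati--Montanari conditioning argument directly on the asymmetric matrix $\matr X$: condition on the $\sigma$-algebra of past iterates, decompose the conditional law of $\matr X$ into a projection onto the linear constraints plus an independent Gaussian on the complementary spaces, identify the fresh Gaussian with $\tilde{\matr\psi}^{(t)}$, $\matr\psi^{(t+1)}$, and obtain the Onsager cancellation from Stein's identity, with the $K\times K$ coefficient matrices carried through and the possible degeneracy of Gram blocks absorbed by a perturbation argument. The paper's proof (Appendix~\ref{proof_Th1}) instead first symmetrizes: it embeds \eqref{alg2} into a dynamics driven by a single $(n+d)\times(n+d)$ GOE matrix, then replaces that matrix by the ``Householder dice'' construction of Lemma~\ref{G_const} --- an explicit, matrix-free re-generation of the dynamics in which each step draws fresh low-dimensional Gaussian elements and block Gram--Schmidt directions --- and finally runs an induction in which the empirical coefficients are matched to the block Cholesky factors of $\mathcal C^{(t,s)}$ via the $\mathcal L^p$-concentration lemmas of Appendix~\ref{preliminariesop}, the memory terms being shown to concentrate to zero rather than being cancelled step-by-step à la Onsager. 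Both proofs rest on the same Gaussian-conditioning principle, and both invoke the same perturbation device (the paper cites Berthier et al., Section 5.4) to assume $\mathcal C^{(1:T+1)}>\matr 0$; what your route buys is a shorter, more standard argument without the doubling of iterations caused by symmetrization, while the paper's construction yields a uniform treatment of both half-steps through one representation lemma, keeps the error bookkeeping explicitly non-asymptotic in the $\Op{\cdot}$ sense of \eqref{eq:equiv}, and doubles as the simulation device used in Section~\ref{Simulations}. Your fallback of quoting the general non-separable state-evolution theorems of Berthier et al. or Gerbelot--Berthier would need extra care, since those results are stated for pseudo-Lipschitz observables in the asymptotic limit, whereas \eqref{Gamma11}--\eqref{Omega11} assert a non-asymptotic $\mathcal L^p$ high-dimensional equivalence; also make sure the independence of the two sequences $\{\tilde{\matr\psi}^{(t)}\}$ and $\{\matr\psi^{(t+1)}\}$, which in your setup comes from the independence of the fresh row-space and column-space Gaussian directions, is stated explicitly rather than left implicit.
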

	\begin{proof}
		See Appendix~\ref{proof_Th1} for a sketch of the proof.  
	\end{proof}
	Note that the above result is non-asymptotic
	and we recall \eqref{eq:equiv} and \eqref{ascon} for the definition and its asymptotic implication  (as $d\to \infty$)  of the high dimensional equivalence notation $\hat\av \simeq \av$.

	With the decoupling principle, the contraction mapping problem essentially boils down to analyzing how the \emph{two-time} (i.e., $(t,s)$) state-evolution (see ~\eqref{state2} and \eqref{state1}) converge to their fixed points as the number of iterations increases. For this purpose, we assume that such fixed points exist. 
	\begin{assumption}\label{ass2}
		There exist the  matrices $\mathcal C^\star$ and $\mathcal B^\star$ satisfying
		\begin{align}
			\mathcal B^\star&=\alpha\mathbb E[G_0^\top f(\Gamma^\star;Y)]-\alpha \mathcal B^\star\mathbb E[f'(\Gamma^\star;Y)]\\
			\mathcal C^\star&=\alpha \mathbb E[f(\Gamma^\star;Y)^\top f(\Gamma^\star;Y)]\;,\label{sefixed}
		\end{align} 
		where $\Gamma^\star\bydef G_{0}\mathcal B^\star+G\sqrt{\mathcal C^\star}$
		with $G\sim \mathcal N(\matr 0,\mathcal I)$ and $(Y,G_0)$ (see \eqref{YG0}) being independent. 
	\end{assumption}
	
	\begin{theorem}\label{Th2}
		Let $\matr X\sim_{\text{i.i.d.}}\mathcal N(\matr 0;\matr I/d)$. Let $f(\gamma;y)$ be two-times differentiable and Lipschitz continuous w.r.t $\gamma$ and $f(0;Y)=\Op{1}$  where $Y$ as in \eqref{YG0}.  Suppose Assumption~\ref{ass2} holds. Chose $\mathcal B^{(1)}=\mathcal B^\star$ and $\mathcal C^{(1,1)}=\mathcal C^\star$. Then,  for all $\tau\geq 1$
		\begin{align}
			\frac{\Vert \matr \gamma^{(t+\tau)}-\matr \gamma^{(t)}\Vert_\texttt{F}}{\sqrt n}&<C\rho_{\rm AT}^{\frac t 2}+ \mathcal{O}(d^{-\frac 1 2})\label{good1}\\
			\frac{\Vert \matr \omega^{(t+\tau)}-\matr \omega^{(t)}\Vert_\texttt{F}}{\sqrt d}&<C \rho_{\rm AT}^{\frac t 2}+\mathcal{O}(d^{-\frac 1 2}),\label{good2}
		\end{align}
		where $C>0$ denotes a fixed constant (independent of $t$) and 
		\begin{equation}
			\rho_{\rm AT}\bydef\rho(\alpha(\mathbb E[f'(\Gamma^\star;Y)\otimes f'(\Gamma^\star;Y)])\;,\label{AT}
		\end{equation}
		with $\rho(\cdot)$ denoting the spectral radius of the matrix in the argument and $\otimes$ denoting the Kronecker product.
	\end{theorem}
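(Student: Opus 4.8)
The plan is to transport everything through the decoupling principle to the deterministic state evolution, to exploit that initializing at the fixed point freezes the one-time order parameters, and then to analyze a single matrix-valued recursion for the two-time covariance gap. Using Proposition~\ref{Th1}, the triangle inequality for the equivalence $\simeq$, and a Gaussian (Lipschitz) concentration for the Frobenius norm of a matrix with i.i.d.\ Gaussian rows, I would first reduce $\frac{1}{\sqrt n}\Vert\matr\gamma^{(t+\tau)}-\matr\gamma^{(t)}\Vert_\texttt{F}$ to $\sqrt{\mathrm{tr}\,\mathcal E^{(t+\tau,t)}}+\mathcal{O}(d^{-1/2})$ — and similarly $\frac{1}{\sqrt d}\Vert\matr\omega^{(t+\tau)}-\matr\omega^{(t)}\Vert_\texttt{F}$ — where $\mathcal E^{(t,s)}:=\mathrm{Cov}(\Psi^{(t)}-\Psi^{(s)})=\mathcal C^{(t,t)}+\mathcal C^{(s,s)}-\mathcal C^{(t,s)}-\mathcal C^{(s,t)}\succeq\matr 0$. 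Since $\mathcal B^{(1)}=\mathcal B^\star$ and $\mathcal C^{(1,1)}=\mathcal C^\star$, a one-line induction on Definition~\ref{SE} (using Assumption~\ref{ass2}) gives $\mathcal B^{(t)}=\mathcal B^\star$ and $\mathcal C^{(t,t)}=\mathcal C^\star$ for every $t$; hence the drift $\matr g_0\mathcal B^\star$ (resp.\ $\matr r_0\mathcal B^\star$) cancels in all memory-differences, $\mathcal E^{(t,s)}=2\mathcal C^\star-\mathcal C^{(t,s)}-\mathcal C^{(s,t)}$, and the theorem reduces to the purely deterministic estimate $\mathrm{tr}\,\mathcal E^{(t+\tau,t)}\le C\rho_{\rm AT}^{\,t}$, uniform in $\tau\ge1$.

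Combining \eqref{state1} at the pairs $(t,t),(s,s),(t,s),(s,t)$ and using $\mathcal C^{(t,t)}=\mathcal C^{(s,s)}=\mathcal C^\star$ yields the recursion $\mathcal E^{(t+1,s+1)}=\alpha\,\mathbb{E}[(f(\Gamma^{(t)};Y)-f(\Gamma^{(s)};Y))^\top(f(\Gamma^{(t)};Y)-f(\Gamma^{(s)};Y))]=:\Phi(\mathcal E^{(t,s)})$, with uniform initial datum $\mathcal E^{(\tau+1,1)}=2\mathcal C^\star$ for every $\tau\ge 1$ and the a priori bound $\Vert\mathcal E^{(t,s)}\Vert\le 4\Vert\mathcal C^\star\Vert$ (Cauchy--Schwarz for covariances), which also shows $\Phi$ maps the order interval $[\matr 0,4\Vert\mathcal C^\star\Vert\mathcal I]$ into itself. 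I would then show $\mathcal E^{(t+\tau,t)}\to\matr 0$ (uniformly in $\tau$) by arguing that (i) $\Phi$ is monotone on the cone of PSD matrices, so the iterates launched from the supersolution $4\Vert\mathcal C^\star\Vert\mathcal I$ decrease and dominate $\mathcal E^{(t+\tau,t)}$, and (ii) $\matr 0$ is the \emph{only} fixed point of $\Phi$ in that interval. Step (ii) is where convexity of the state-evolution map — together with $\rho_{\rm AT}<1$ — is indispensable: convexity forces $\Phi(\mathcal E)\prec\mathcal E$ strictly for every nonzero $\mathcal E$ in the interval, something monotonicity alone cannot provide (this is the content of Remark~\ref{remforkrazakala}).

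I would then upgrade this qualitative convergence to the geometric rate $\rho_{\rm AT}$. Writing $\delta:=\Gamma^{(t)}-\Gamma^{(s)}=\Psi^{(t)}-\Psi^{(s)}\sim\mathcal N(\matr 0,\mathcal E^{(t,s)})$ and $\Gamma_u:=(1-u)\Gamma^{(s)}+u\Gamma^{(t)}$, the fundamental theorem of calculus along the segment and matrix-Jensen for $X\mapsto X^\top X$ give $\mathcal E^{(t+1,s+1)}\preceq\alpha\int_0^1\mathbb{E}[f'(\Gamma_u;Y)^\top\delta^\top\delta\,f'(\Gamma_u;Y)]\,\mathrm d u$; decomposing $\delta$ into its conditional mean given $\Gamma_u$ — of size comparable to $\mathcal E^{(t,s)}$ (and the skew term), hence of quadratic effect — and the orthogonal Gaussian residual isolates the leading term $\alpha\int_0^1\mathbb{E}[f'(\Gamma_u;Y)^\top\mathcal E^{(t,s)}f'(\Gamma_u;Y)]\,\mathrm d u$. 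The crucial point is that the fluctuating part of $\mathrm{Cov}(\Gamma_u)$ equals $\mathcal C^\star-u(1-u)\mathcal E^{(t,s)}\preceq\mathcal C^\star$, so by the convexity/monotonicity of the state-evolution (Gaussian-smoothing) map in the PSD order — whose verification is where two-times differentiability of $f$ enters, via a Price/Gaussian-integration-by-parts identity — one obtains $\alpha\,\mathbb{E}[f'(\Gamma_u;Y)\otimes f'(\Gamma_u;Y)]\preceq\mathcal M^\star:=\alpha\,\mathbb{E}[f'(\Gamma^\star;Y)\otimes f'(\Gamma^\star;Y)]$ as operators preserving the PSD cone, whence $\mathcal E^{(t+1,s+1)}\preceq\mathcal M^\star[\mathcal E^{(t,s)}]+c\Vert\mathcal E^{(t,s)}\Vert^2\mathcal I$. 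Since $\rho(\mathcal M^\star)=\rho_{\rm AT}$ by \eqref{AT} and $\mathcal M^\star$ preserves a proper cone, a Krein--Rutman/Perron eigen-matrix argument together with a bootstrap (the quadratic remainder becomes summable once $\Vert\mathcal E^{(t,s)}\Vert$ decays geometrically) gives $\mathrm{tr}\,\mathcal E^{(t+\tau,t)}\le C\rho_{\rm AT}^{\,t}$ uniformly in $\tau$; substituting into the first paragraph yields \eqref{good1}--\eqref{good2}. The antisymmetric part $\mathcal C^{(t,s)}-\mathcal C^{(s,t)}$ obeys an analogous contracting estimate and is absorbed in the remainder.

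The main obstacle is exactly the passage from local to global: extracting the \emph{exact} de Almeida--Thouless rate $\rho_{\rm AT}$ from a nonlinear, matrix-valued recursion that starts at the order-one gap $2\mathcal C^\star$ and not merely in a neighborhood of $\matr 0$. A purely monotonicity-based scheme only drives the iterates down to \emph{some} fixed point, which could be nonzero, and gives at best the crude Lipschitz rate $\alpha L^2$ (which may exceed $1$); it is the convexity of the state-evolution map — not exploited in earlier analyses — that simultaneously pins that fixed point to $\matr 0$ and produces the sharp threshold $\rho_{\rm AT}$.
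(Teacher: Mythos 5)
Your overall architecture (pass to state evolution via Proposition~\ref{Th1}, use $\mathcal B^{(1)}=\mathcal B^\star$, $\mathcal C^{(1,1)}=\mathcal C^\star$ to freeze the one-time order parameters, and reduce the theorem to a deterministic recursion for the two-time covariance gap governed by $\rho_{\rm AT}$) is exactly the paper's. But the analytic core of your argument has a genuine gap, in two places. First, your key comparison in the third paragraph --- that $\mathrm{Cov}(\Gamma_u)=\mathcal C^\star-u(1-u)\mathcal E^{(t,s)}\preceq\mathcal C^\star$ implies $\alpha\,\mathbb E[f'(\Gamma_u;Y)\otimes f'(\Gamma_u;Y)]\preceq \mathcal M^\star$ as cone-preserving operators --- is not justified and is false in general: for a \emph{single} Gaussian copy appearing in both factors, the map $\Sigma\mapsto \mathbb E_{Z\sim\mathcal N(0,\Sigma)}[f'(G_0\mathcal B^\star+Z;Y)^\top\Delta\, f'(G_0\mathcal B^\star+Z;Y)]$ is not monotone in $\Sigma$ (already for $K=1$, $\mathbb E[g(X)^2]$ need not increase with $\mathrm{Var}(X)$; the heat-semigroup derivative contains unsigned terms of the form $f'''\Delta f'$). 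The paper's Lemma~\ref{TContradiction}(b) is a comparison of a different kind: it compares two \emph{partially correlated} copies, each with marginal covariance fixed at $\mathcal C^\star$, as the common component $\mathcal X$ increases, and the proof works precisely because the two factors carry \emph{independent} residual noises $G'$ and $G''$, so that Gaussian integration by parts turns the derivative of the interpolation into the manifestly PSD square $\mathbb E[\mathcal F'(G)^\top\Delta\,\mathcal F'(G)]$ built from conditional expectations of $f''$ (this is where the two-times differentiability of $f$ is actually used). Second, your step (ii) --- ``convexity forces $\Phi(\mathcal E)\prec\mathcal E$ strictly for every nonzero $\mathcal E$, hence $\matr 0$ is the only fixed point'' --- is asserted, not proved; it is essentially equivalent to the global linear domination you never establish, and monotonicity alone only yields convergence to \emph{some} fixed point, which is exactly the pitfall of Remark~\ref{remforkrazakala} that you cite but do not actually circumvent.

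Once the correct global bound $\matr 0\leq \mathcal T(\mathcal X)-\mathcal T(\mathcal Y)\leq \alpha\,\mathbb E[f'(\Gamma^\star;Y)^\top(\mathcal X-\mathcal Y)f'(\Gamma^\star;Y)]$ on the whole order interval $[\matr 0,\mathcal C^\star]$ is in hand, your entire local-to-global scaffolding (qualitative convergence from a supersolution, quadratic remainder, Krein--Rutman plus bootstrap) becomes unnecessary: since $\mathcal C^{(1,t)}=\matr 0$ for $t>1$ forces $\mathcal C^{(t,t+\tau)}=\mathcal C^{(t,t+1)}$, one simply unrolls the linear domination from the order-one initial gap $\Delta^{(1)}=\mathcal C^\star$, vectorizes, and reads off $\Vert\Delta^{(t+1)}\Vert_{\texttt F}\leq \Vert\mathcal U\Vert_2\Vert\mathcal U^{-1}\overline{\mathcal C^\star}\Vert\,\rho_{\rm AT}^{t}$ --- with no appeal to $\rho_{\rm AT}<1$, which is not among the hypotheses of Theorem~\ref{Th2} (your scheme needs it; note, though, that the case $\rho_{\rm AT}\geq1$ is anyway trivial since $\matr 0\leq\Delta^{(t)}\leq\mathcal C^\star$). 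Also be careful with your auxiliary interval $[\matr 0,4\Vert\mathcal C^\star\Vert\mathcal I]$: the gap recursion is only defined for gaps corresponding to admissible joint covariances, i.e.\ $\matr 0\leq\mathcal E\leq 2\mathcal C^\star$, so the monotone iteration should be launched from $2\mathcal C^\star$, not from $4\Vert\mathcal C^\star\Vert\mathcal I$.
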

	Here, we note from the property \eqref{ascon} that e.g. the result \eqref{good1} implies almost surely 
	\[\lim_{d\to \infty}\frac{\Vert \matr \omega^{(t+\tau)}-\matr \omega^{(t)}\Vert_\texttt{F}}{\sqrt d}<C\rho_{\rm AT}^{\frac t 2} \;.
	\]
	Furthermore, we use the notation $\rho_{\rm AT}$ to associate AT stability. Indeed, the AMP dynamics \eqref{alg2} is not stable in the region $\rho_{\rm AT}\geq 1$, specifically, when $\rho_{\rm AT}\geq 1$  we have $
	\lim_{t\to \infty}\Vert \mathcal C^\star-\mathcal C^{(t,t+1)}\Vert_{\rm F}\neq 0$ (see Appendix ~\ref{proofremat}). 
	\subsection{The Proof of Theorem~\ref{Th2}}\label{PTh2}
	For $\mathcal B^{(1)}=\mathcal B^\star$ and $\mathcal C^{(1,1)}=\mathcal C^\star$,  from Definition~\ref{SE} it follows  inductively (over iterations steps)   that
	\begin{align}
		\mathcal B^{(t)}=\mathcal B^\star\text{~~and~~} \mathcal C^{(t,t)}=\mathcal C^\star\qquad \forall t. \label{vnice}
	\end{align}
	Moreover, since $\mathcal C^{(1,t)}=\matr 0$ for all $t>1$, it is also easy to verify from \eqref{vnice} that
	\begin{equation}
		\mathcal C^{(t,t+\tau)}=\mathcal C^{(t,t+1)}=\mathcal C^{(t+1,t)}\quad \forall \tau\geq 1. \label{vnice2}
	\end{equation}
	Then, we have
	\begin{align}
		\frac{\Vert \matr \omega^{(t+\tau)}-\matr \omega^{(t)}\Vert_\texttt{F}^2}{ d}&\overset{(a)}{=}
		\frac{\Vert \matr \psi^{(t+\tau)}-\matr \psi^{(t)}\Vert_\texttt{F}^2}{d}+ \mathcal{O}(d^{-\frac 1 2})\\
		&\overset{(b)}{=}2 \big\Vert {\Delta^{(t)}} \big\Vert_{\texttt{F}} +\mathcal{O}(d^{-\frac 1 2})\;,\label{omegadif}
	\end{align}
	where we have defined the single-time-step deviation operator
	\begin{equation}
		\Delta^{(t)}\bydef \mathcal C^\star-\mathcal C^{(t,t+1)}\label{ttd}.
	\end{equation}
	Here, steps (a), and (b) use Proposition~\ref{Th1} and Lemma~\ref{eq:ip_concentration} in Appendix~\ref{preliminariesop}, respectively.
	Then, from \eqref{eq:op_sqrt} we have \begin{equation}
		\frac{\Vert \matr \omega^{(t+\tau)}-\matr \omega^{(t)}\Vert_\texttt{F}}{\sqrt d}= \sqrt{2} \big\Vert {\Delta^{(t)}} \big\Vert_{\texttt{F}}^{\frac 1 2} +\mathcal{O}(d^{-\frac 1 2})\;.
	\end{equation}
	Similarly, we have 
	\begin{equation}
		\frac{\Vert \matr \gamma^{(t+\tau)}-\matr \gamma^{(t)}\Vert_\texttt{F}}{\sqrt n}= \sqrt{2} \big\Vert {\Delta^{(t)}} \big\Vert_{\texttt{F}}^{\frac 1 2} +\mathcal{O}(d^{-\frac 1 2})\;.
	\end{equation}
	Hence, we only need to verify that  $\Vert\Delta^{(t)}\Vert_\texttt{F}<C\rho_{\rm AT}^{t}$ for a constant $C$ independent of $t$. To this end, 
	inspired by the mapping $\psi$ in \cite[Lemma~2.2]{Bolthausen}, we introduce the mapping for $\matr 0\leq \mathcal X\leq \mathcal C^\star$
	\begin{align}
		\mathcal T(\mathcal X)\bydef\alpha&\mathbb E[f(G_0\mathcal B^\star+G\sqrt{\mathcal X}+ G'\sqrt{\mathcal C^\star-\mathcal X};Y)^\top \nonumber \\&\times f(G_0\mathcal B^\star+G\sqrt{\mathcal X}+ G''\sqrt{\mathcal C^\star-\mathcal X};Y)]\; \label{mappin} 
	\end{align}
	where the $K$.dim random vectors $\{G,G',G''\}$ are all mutually independent and distributed as $\mathcal N(\matr 0,\mathcal I)$, and they are independent of $(Y,G_0)$. E.g. notice that if $\matr 0\leq \mathcal C^{(t-1,t)}$ we have
	\begin{align}
		\Delta^{(t)}&=  \mathcal T(\mathcal C^{\star})-\mathcal T(\mathcal C^{(t-1,t)}).
	\end{align}
	\begin{lemma}\label{TContradiction}
		Let $f(\gamma;y)$ be two-times differentiable and Lipschitz continuous w.r.t. $\gamma$.  Suppose Assumption~\ref{ass2} holds. Then, we have for $\matr 0\leq \mathcal X\leq \mathcal C^\star$ 
		\begin{equation}
			\matr 0\overset{(a)}{\leq}\mathcal T(\mathcal X)-\mathcal T(\mathcal Y)\overset{(b)}{\leq} \alpha \mathbb E[f'(\Gamma^\star;Y)^\top (\mathcal X-\mathcal Y) f'(\Gamma^\star;Y)]\label{result1}.
		\end{equation}
		\begin{proof}
			See Appendix~\ref{dTContradiction}
		\end{proof}
	\end{lemma}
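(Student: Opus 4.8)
The plan is to deduce both inequalities in \eqref{result1} from a single integral representation of $\mathcal T(\mathcal X)-\mathcal T(\mathcal Y)$ obtained by Gaussian interpolation along the segment joining $\mathcal Y$ to $\mathcal X$ (the matrix analogue of the scalar computation behind \cite[Lemma~2.2]{Bolthausen}); throughout I read the hypothesis as $\matr 0\le\mathcal Y\le\mathcal X\le\mathcal C^\star$, which is the regime in which Lemma~\ref{TContradiction} is applied. First I would observe that, conditionally on $(Y,G_0)$, the two arguments of $f$ in \eqref{mappin} form a jointly Gaussian pair $(\Gamma,\tilde\Gamma)$ with common mean $G_0\mathcal B^\star$, common marginal covariance $\mathcal C^\star$, and cross-covariance $\mathcal X$; hence $\mathcal T(\mathcal X)=\alpha\,\mathbb E[f(\Gamma;Y)^\top f(\tilde\Gamma;Y)]$ is a functional of this joint law and depends on $\mathcal X$ only through the cross-covariance, the marginals being frozen at $\mathcal C^\star$. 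Setting $\mathcal X_\theta\bydef(1-\theta)\mathcal Y+\theta\mathcal X$ for $\theta\in[0,1]$, convexity of $\{\mathcal Z:\matr 0\le\mathcal Z\le\mathcal C^\star\}$ keeps $\mathcal X_\theta$ in this set, so all square roots in \eqref{mappin} are defined and the $2K\times 2K$ block matrix with diagonal blocks $\mathcal C^\star$ and off-diagonal blocks $\mathcal X_\theta$ is positive semidefinite; let $(\Gamma_\theta,\tilde\Gamma_\theta)$ denote the corresponding pair, realised with a \emph{shared} Gaussian part $W$ of covariance $\mathcal X_\theta$ and two independent parts of covariance $\mathcal C^\star-\mathcal X_\theta$ (so $\theta=0$ gives the pair of $\mathcal T(\mathcal Y)$ and $\theta=1$ that of $\mathcal T(\mathcal X)$).

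The key step is to differentiate $\theta\mapsto\mathcal T(\mathcal X_\theta)$. Since only the cross-covariance varies, and it varies affinely in $\theta$, a standard Gaussian interpolation identity (Price's theorem, equivalently Gaussian integration by parts), which here reduces to ``derivative with respect to a covariance entry $=$ expectation of the corresponding mixed second partial derivative,'' gives, with the Jacobian convention of Definition~\ref{SE},
\[
\frac{d}{d\theta}\,\mathcal T(\mathcal X_\theta)
=\alpha\,\mathbb E\left[f'(\Gamma_\theta;Y)^\top(\mathcal X-\mathcal Y)\,f'(\tilde\Gamma_\theta;Y)\right].
\]
The twice differentiability and Lipschitz continuity of $f$ (which make $f$ grow at most linearly and $f'$ bounded, hence all the relevant Gaussian moments finite) are exactly what justify differentiating under the expectation and the integration-by-parts identity. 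Integrating over $[0,1]$,
\[
\mathcal T(\mathcal X)-\mathcal T(\mathcal Y)
=\alpha\int_0^1\mathbb E\left[f'(\Gamma_\theta;Y)^\top(\mathcal X-\mathcal Y)\,f'(\tilde\Gamma_\theta;Y)\right]d\theta,
\]
so it remains to sandwich the integrand, uniformly in $\theta$, between $\matr 0$ and $\alpha\,\mathbb E[f'(\Gamma^\star;Y)^\top(\mathcal X-\mathcal Y)f'(\Gamma^\star;Y)]$.

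Both pinches come from conditioning on $\mathcal G\bydef(Y,G_0,W)$, given which $\Gamma_\theta$ and $\tilde\Gamma_\theta$ are i.i.d.\ (and equal in conditional law). Write $\mathcal X-\mathcal Y=\mathcal E^\top\mathcal E$ (possible since $\mathcal X\ge\mathcal Y$) and $A(\gamma)\bydef\mathcal E\,f'(\gamma;Y)$. Conditional independence and the equality of the conditional laws of $\Gamma_\theta,\tilde\Gamma_\theta$ give $\mathbb E[f'(\Gamma_\theta;Y)^\top(\mathcal X-\mathcal Y)f'(\tilde\Gamma_\theta;Y)\mid\mathcal G]=\bar A^\top\bar A\ge\matr 0$ with $\bar A\bydef\mathbb E[A(\Gamma_\theta)\mid\mathcal G]$; taking expectations proves part~(a). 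For part~(b), the conditional second-moment identity $\mathbb E[A(\Gamma_\theta)^\top A(\Gamma_\theta)\mid\mathcal G]-\bar A^\top\bar A=\mathbb E[(A(\Gamma_\theta)-\bar A)^\top(A(\Gamma_\theta)-\bar A)\mid\mathcal G]\ge\matr 0$ gives $\mathbb E[f'(\Gamma_\theta;Y)^\top(\mathcal X-\mathcal Y)f'(\tilde\Gamma_\theta;Y)]\le\mathbb E[f'(\Gamma_\theta;Y)^\top(\mathcal X-\mathcal Y)f'(\Gamma_\theta;Y)]$; finally the Gaussian part of $\Gamma_\theta$ has covariance $\mathcal X_\theta+(\mathcal C^\star-\mathcal X_\theta)=\mathcal C^\star$ and is independent of $(Y,G_0)$, so $(\Gamma_\theta,Y)\overset{d}{=}(\Gamma^\star,Y)$ and the right-hand side equals the $\theta$-free quantity $\mathbb E[f'(\Gamma^\star;Y)^\top(\mathcal X-\mathcal Y)f'(\Gamma^\star;Y)]$. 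Integrating the two $\theta$-uniform bounds over $[0,1]$ closes the argument.

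The hard part is making the interpolation rigorous. The representation \eqref{mappin} involves $\sqrt{\mathcal X_\theta}$ and $\sqrt{\mathcal C^\star-\mathcal X_\theta}$, which need not be differentiable in $\theta$ (e.g.\ where an eigenvalue of $\mathcal C^\star-\mathcal X_\theta$ hits $0$), so one cannot differentiate \eqref{mappin} term by term; this is precisely why it is essential that $\mathcal T(\mathcal X_\theta)$ is a function of the (affinely varying) covariance alone, so that the interpolation can be run at the level of the covariance, where no square root is ever differentiated, and the regularity of $f$ is used only to license the exchange of derivative and expectation. One should also verify, as a bookkeeping check, that the decomposition $\mathcal X-\mathcal Y=\mathcal E^\top\mathcal E$ together with the Jacobian convention $[f']_{kk'}=\partial[f]_{k'}/\partial\gamma_k$ of Definition~\ref{SE} produces exactly the quadratic form $f'(\cdot)^\top(\mathcal X-\mathcal Y)f'(\cdot)$ in \eqref{result1}, and not its transpose.
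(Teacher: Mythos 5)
Your proof is correct. For inequality (a) it is essentially the paper's argument: both of you differentiate $\mathcal T$ along the affine path of cross-covariances $\mathcal X_\theta=(1-\theta)\mathcal Y+\theta\mathcal X$ while the marginals stay frozen at $\mathcal C^\star$ (you via Price's theorem, the paper via the characteristic-function representation with elimination/duplication matrices --- the same computation, and both correctly avoid differentiating the non-smooth matrix square roots), arriving at the derivative \eqref{ga0}; positivity then follows in both cases by conditioning on the shared Gaussian component, under which the two arguments become conditionally i.i.d.\ and the expectation factors into a Gram matrix. For inequality (b), however, your route is genuinely different. The paper fixes $\Delta\bydef\mathcal X-\mathcal Y$, shows that the derivative map $\mathcal S\mapsto\widetilde{\mathcal T}_\Delta(\mathcal S)$ is itself monotone by a \emph{second} interpolation involving the Hessian $f''$ (this is where the two-times-differentiability hypothesis is consumed), and then evaluates at the endpoint $\mathcal S=\mathcal C^\star$, where the two copies of $f'$ collapse onto the common argument $\Gamma^\star$. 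You instead bound the integrand uniformly in $\theta$ by the conditional Cauchy--Schwarz inequality $\bar A^\top\bar A\leq\mathbb E[A^\top A\mid\mathcal G]$ and then observe that the marginal law of $(\Gamma_\theta,Y)$ is $\theta$-independent and equal to that of $(\Gamma^\star,Y)$. Your argument is shorter, never touches $f''$ (so two-times differentiability becomes superfluous for this lemma; $f\in C^1$ Lipschitz suffices), and makes transparent that the bound is exactly the ``diagonal'' value of the derivative. What the paper's longer route buys is the stronger structural fact that the derivative $\mathcal G'(q)$ is nondecreasing in $q$, which is not needed for \eqref{result1} itself. One last point in your favour: the lemma as printed quantifies only over $\mathcal X$ while using $\mathcal Y$ in \eqref{result1}; your reading $\matr 0\leq\mathcal Y\leq\mathcal X\leq\mathcal C^\star$ is the intended one and is the regime in which the lemma is applied.
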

	
	\begin{remark}\label{remforkrazakala}
		It is not enough to employ the monotonicity, inequality (a) in \eqref{result1} to show convergence to  $\mathcal C^\star$ while (a)
		yields
		\begin{equation}
			\matr 0 \leq \mathcal C^{(t,t+1)}\leq \mathcal C^{(t+1,t+2)}\leq\mathcal C^\star\quad \forall   t
		\end{equation}
		which implies convergence $\mathcal C^{(t,t+1)}\to \hat{\mathcal C}^\star=\mathcal T(\hat{\mathcal C}^\star)$ with $\hat{\mathcal C}^\star\leq \mathcal C^\star$. Even assuming that $\mathcal C^\star $ has a unique solution of the fixed-point equation in \eqref{sefixed} does not imply that $\hat{\mathcal C}^\star= \mathcal C^\star$. A similar argument was misinterpreted in the proof of \cite{loureiro2021learning}.   
	\end{remark}
	\begin{definition}\label{vectorization}
		The vectorization of a $K\times K$ matrix $\mathcal X$, denoted $\overline{\mathcal X}$, is a $K^2\times 1$ column vector by stacking the column vectors of $\mathcal X=[\matr x_1, \matr x_2,\cdots,\matr x_K]$ below one another as 
		$\overline{\mathcal X}=[\matr x_1^\top,\matr x_2^\top,\cdots, \matr x_K^\top]^\top$.  Moreover, we say $\overline{\mathcal Y}\leq \overline{\mathcal X} \iff \mathcal Y\leq \mathcal X$, i.e. $(\overline{\mathcal X}-\overline{\mathcal Y})^\top(\matr u\otimes \matr u)\geq 0 $ for all $\matr u\in \RR^{K\times 1}$.
	\end{definition}

	For the arbitrary random elements $\mathcal X_t\sim f'(\Gamma^\star;Y)$ independent for each $t$ we write from Lemma~\ref{TContradiction} that
	\begin{align}
		\Delta^{(t+1)}&\leq\alpha \mathbb E[\mathcal X_t^\top  \Delta^{(t)}\mathcal X_t]\\
		&\leq\alpha^2 \mathbb E[(\mathcal X_{t-1} \mathcal X_{t})^\top \Delta^{(t-1)}(\mathcal X_{t-1}\mathcal X_t)]\\
		&~~\vdots \nonumber \\
		&\leq\alpha^t \mathbb E[(\mathcal X_{1}\mathcal X_{2}\cdots\mathcal X_{t})^\top \mathcal C^\star (\mathcal X_{1}\mathcal X_{2}\cdots \mathcal X_t)]
	\end{align}
	with noting that $\Delta^{(1)}=\mathcal C^\star$. Hence, in terms of the vectorization notation we have 
	\begin{align}
		\overline{\Delta^{(t+1)}}&\leq \alpha^t \mathbb E[\mathcal X_{1}\mathcal X_{2}\cdots\mathcal X_{t} \otimes  \mathcal X_{1}\mathcal X_{2}\cdots \mathcal X_t]^\top~\overline{\mathcal C^\star}\\
		&= \alpha^t \mathbb E[(\mathcal X_{1}\otimes \mathcal X_1)(\mathcal X_{2}\otimes\mathcal X_2)\cdots(\mathcal X_{t} \otimes\mathcal X_t)]^\top~\overline{\mathcal C^\star}\nonumber\\
		&=\left(\alpha \mathbb E[\mathcal X_1\otimes \mathcal X_1]^\top\right)^t~\overline{\mathcal C^\star}\label{key_result}
	\end{align}
	with noting that $\mathcal X_{t}$ and $\mathcal X_{t'}$ are independent for all $t\neq t'$.
	
	From \cite[Theorem~4.3.1]{horn2012matrix} we note that
	\[{\mathcal Y}\leq {\mathcal X} \implies \Vert \mathcal Y\Vert_{\texttt{F}}\leq   \Vert \mathcal X\Vert_{\texttt{F}}\;.\]
	Note also that $\Vert \mathcal X\Vert_{\texttt{F}}=\Vert \overline{\mathcal X}\Vert$. Then, from \eqref{key_result} we have
	\begin{align}
		\Vert {\Delta^{(t+1)}}\Vert_{\texttt{F}}&\leq \big\Vert \underbrace{\left(\alpha\mathbb E[f'(\Gamma^\star;Y)\otimes f'(\Gamma^\star;Y)]^\top\right)^{t}}_{\mathcal U\mathcal D^t\mathcal U^{-1}}\overline {\mathcal C^\star}\big\Vert\\
		&\leq \Vert\mathcal U\Vert_2\Vert\mathcal U^{-1}\overline {\mathcal C^\star} \Vert\rho_{\rm AT}^{t}
	\end{align}
	where the later inequality uses the eigenvalue decomposition $\mathcal U\mathcal D^t\mathcal U^{-1}$ with $\mathcal D$ being 
	diagonal. This completes the proof. 
	\section{Application to Analyzing Convex Optimizations}\label{Sec_Application}
	As an application of Theorem~\ref{Th2}, we analyze the convex optimization 
	\begin{equation}
		\matr \omega^\star\bydef  \underset{\matr \omega }{\arg\min}~\sum_{1\leq i\leq n} l(\matr x_i^\top\matr \omega; y^{[i]}) +
		\frac{\lambda_0}{2}\Vert \matr\omega \Vert_\texttt{F}^2 \label{opt}
	\end{equation}
	for a convex loss function $l(\theta;y)$ (w.r.t. $\theta$) and a regularization constant $\lambda_0>0$. Here, the data matrix $\matr y$ is assumed to be generated according to the log-likelihood \eqref{model1}. In particular, we may consider the so-called \emph{cross-entropy loss}--commonly used in multi-class classifications---
	\begin{equation}
		l(\theta;y)=-\sum_{k\leq K}y_{k}\ln\frac{e^{\theta_k}}{\sum_{k'} e^{\theta_{k'}}}\label{cel}\;,
	\end{equation}
	where $y$ is often defined as \emph{one-hot encoded vector}, i.e. $y=e_k$ for some $k\in[K]$ with $e_k$ denoting a $K$-dim. unit vector.
	
	Given an appropriate $K\times K$ deterministic matrix $\mathcal V^\star>\matr 0$ (see Assumption~\ref{ass3} below), we introduce the proximal operator
	\begin{equation}
		m(\gamma;y)\bydef  \underset{\theta}{\arg\min}\left( l(\theta;y)+\frac{1}{2}(\gamma-\theta)\mathcal V^\star(\gamma-\theta)^\top\right).\label{proximal}
	\end{equation}
	We set the non-linear function $f$ of the AMP dynamics \eqref{alg2} as
	\begin{equation}
		f(\gamma;y)= m(\gamma;y)-\gamma. \label{function}
	\end{equation}
	Here, it is worth to noting the relations
	\begin{align}
		f(\gamma;y)&=-l'(m(\gamma;y);y)(\mathcal V^\star)^{-1},
		\label{flrelation}\\
		f'(\gamma;y)&=-l''(m(\gamma;y);y)(\mathcal V^\star+l''(m(\gamma;y);y))^{-1}\;,
	\end{align}
	where $l'(\theta;y)$ and $l''(\theta;y)$ denote the gradient and Hessian  (w.r.t. $\theta$) of the loss function, respectively.

	\begin{assumption}\label{ass3}
		Let $\lambda_0>0$ and the function $f(\gamma;y)$ be as in \eqref{function}. There exist the $K\times K$ order matrices $\mathcal C^\star$, $\mathcal B^\star$, and $\mathcal V^\star>\matr 0$ satisfying the system of equations
		\begin{subequations}
			\label{fixed}	
			\begin{align}
				\mathcal V^\star&=\lambda_0\mathcal I -\alpha\mathbb E[f'(\Gamma^\star;Y)]\mathcal V^\star\label{fixed1}\\
				\mathcal B^\star&=\frac {\alpha}{\lambda_0}\mathbb E[G_0^\top f(\Gamma^\star;Y)]\mathcal V^\star\label{fixed2}\\
				\mathcal C^\star&=\alpha \mathbb E[f(\Gamma^\star;Y)^\top f(\Gamma^\star;Y)]\label{fixed3}\;,
			\end{align} 
		\end{subequations}
		where we define the random vector $\Gamma^\star=G_0\mathcal B^\star +G\sqrt{\mathcal C^\star}$ with $G\sim\mathcal N(\matr 0,\mathcal I)$ being independent of $(Y,G_0)$ (see \eqref{YG0}).
	\end{assumption}
	
	Note that Assumption~\ref{ass3} coincides with Assumption \ref{ass2}, i.e. existence of the fixed point of the state-evolution. 
	
	\begin{remark}\label{fixed-point}
		Let $f(\gamma;y)$ be as in \eqref{function} and $\mathcal V^\star$ as in Assumption~\ref{ass3}.  The fixed-point of $\matr\omega^{(t)}$ in the AMP dynamics \eqref{alg2} coincides with $\matr\omega^{\star}$ in \eqref{opt}.
		\begin{proof}
			See Equation \ref{goodbound}.
		\end{proof}
	\end{remark}

	\begin{assumption}\label{ass4}
		Let the loss function $l(\theta;y)$ be three times differentiable (w.r.t. to $\theta$) and convex. Let  $\Vert l''(\theta;y)\Vert_2$ be bounded for any $(\theta,y)$. Furthermore, let $l'(\matr 0;Y)=\Op{1}$ where the random variable $Y$ as in Assumption~\ref{ass3}. 
	\end{assumption}

	When $Y=\Op{1}$, the cross entropy loss function \eqref{cel} fulfills the conditions specified in Assumption~\ref{ass4}.  While in multi-class classification $Y$ is often defined as a one-hot coded vector such that $\Vert Y\Vert=1$, we note that the family of rvs $\Op{1}$ includes a wide range of distributions characterized 
	by "heavy" exponential tails, e.g. $A^D=\Op{1}$ for a sub-Gaussian rv $A$ and for any large (constant)~$D$. Hence, our analysis can be applied to a broader range of empirical risk minimization applications. 
	
	\begin{proposition}[The AT stability]\label{pat}
		Suppose Assumption~\ref{ass3} holds. Let the constant $\rho_{\rm AT}$ be as in \eqref{AT} with the function $f(\gamma;y)$ as in~\eqref{function}. Let the loss function $l(\theta;y)$ be two-times differentiable (w.r.t. to $\theta$) and convex. Then, $\rho_{\rm AT}<1$.
		\begin{proof}
			See Appendix~\ref{proof_AT}. 
		\end{proof}
	\end{proposition}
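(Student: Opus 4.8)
The plan is to exhibit $\rho_{\rm AT}$ as the spectral radius of a \emph{positive} (positive-semidefinite-cone preserving) linear map on symmetric $K\times K$ matrices, and then to bound it by testing that map against the identity matrix, which collapses the two-copy quantity $\mathbb E[f'\otimes f']$ onto the one-copy fixed-point equation \eqref{fixed1}. Set $H\bydef l''(m(\Gamma^\star;Y);Y)$, so that $H\geq\matr 0$ by convexity of $l$, while $\mathcal V^\star>\matr 0$ by Assumption~\ref{ass3}. Using the identity for $f'$ recorded right after \eqref{flrelation}, a similarity transform by $(\mathcal V^\star)^{1/2}$ turns $f'(\Gamma^\star;Y)$ into the symmetric matrix
\[
S\bydef (\mathcal V^\star)^{-1/2}f'(\Gamma^\star;Y)(\mathcal V^\star)^{1/2}=-\widetilde H(\mathcal I+\widetilde H)^{-1},\qquad \widetilde H\bydef (\mathcal V^\star)^{-1/2}H(\mathcal V^\star)^{-1/2}\geq\matr 0,
\]
whose eigenvalues lie in $(-1,0]$; equivalently $-\mathcal I<S\leq\matr 0$. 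Since $f'\otimes f'$ is then similar (via $(\mathcal V^\star)^{1/2}\otimes(\mathcal V^\star)^{1/2}$) to $S\otimes S$, and since, after vectorization, $\alpha\mathbb E[S\otimes S]$ is the matrix of the map $\mathcal T(\mathcal W)\bydef\alpha\mathbb E[S\mathcal W S]$, we get $\rho_{\rm AT}=\rho(\mathcal T)$, where $\mathcal T$ is linear, monotone, and maps the PSD cone into itself.

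The key step is to evaluate $\mathcal T$ at $\mathcal I$. Because $\matr 0\leq -S<\mathcal I$ pointwise we have $S^{2}\leq -S$ as matrices, hence $\mathcal T(\mathcal I)=\alpha\mathbb E[S^{2}]\leq -\alpha\mathbb E[S]$. On the other hand, \eqref{fixed1} reads $(\mathcal I+\alpha\mathbb E[f'(\Gamma^\star;Y)])\mathcal V^\star=\lambda_0\mathcal I$, and using $\mathbb E[f'(\Gamma^\star;Y)]=(\mathcal V^\star)^{1/2}\mathbb E[S](\mathcal V^\star)^{-1/2}$ this becomes, after conjugation, $\mathcal I+\alpha\mathbb E[S]=\lambda_0(\mathcal V^\star)^{-1}$, so that
\[
-\alpha\mathbb E[S]=\mathcal I-\lambda_0(\mathcal V^\star)^{-1}\leq\Big(1-\frac{\lambda_0}{\Vert\mathcal V^\star\Vert_2}\Big)\mathcal I\bydef\mu\,\mathcal I,
\]
with $\mu\in[0,1)$ since $\lambda_0>0$ and $\Vert\mathcal V^\star\Vert_2$ is finite. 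Combining, $\mathcal T(\mathcal I)\leq\mu\,\mathcal I$.

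Finally I would close with a Collatz--Wielandt (subinvariance) argument: monotonicity and linearity of $\mathcal T$ propagate $\mathcal T(\mathcal I)\leq\mu\mathcal I$ to $\matr 0\leq\mathcal T^{k}(\mathcal I)\leq\mu^{k}\mathcal I$ for all $k\geq 1$; since any symmetric $\mathcal W$ satisfies $-\Vert\mathcal W\Vert_2\,\mathcal I\leq\mathcal W\leq\Vert\mathcal W\Vert_2\,\mathcal I$, splitting $\mathcal W$ into its positive and negative parts gives $\Vert\mathcal T^{k}(\mathcal W)\Vert_2\leq 2\mu^{k}\Vert\mathcal W\Vert_2$, i.e. $\Vert\mathcal T^{k}\Vert\leq 2\mu^{k}$; hence $\rho_{\rm AT}=\rho(\mathcal T)=\lim_{k\to\infty}\Vert\mathcal T^{k}\Vert^{1/k}\leq\mu<1$.

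I expect the main obstacle to be exactly the identification in the second step: a direct attempt to prove operator inequalities for the Kronecker map $\mathbb E[f'\otimes f']$, or an argument using only monotonicity of the state-evolution map (cf.\ Remark~\ref{remforkrazakala}), does not close; the decisive observation is that probing $\mathcal T$ with the identity reduces the two-copy object to the scalar-type bound $S^{2}\leq -S$ together with the relation $\alpha\mathbb E[S]=\lambda_0(\mathcal V^\star)^{-1}-\mathcal I$ supplied by \eqref{fixed1}. One should also verify the regularity needed for the $f'$ formula (differentiability of the proximal map $m$, which follows from $\mathcal V^\star+H>\matr 0$), but this is routine under Assumption~\ref{ass3}.
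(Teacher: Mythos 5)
Your argument is correct in substance but takes a genuinely different route from the paper's. The paper's proof in Appendix~\ref{proof_AT} stays with the $K^2\times K^2$ object: writing $\Lambda^\star\bydef l''(m(\Gamma^\star;Y);Y)\geq \matr 0$ and using exactly the two ingredients you isolate (the identity $f'(\Gamma^\star;Y)=-\Lambda^\star(\mathcal V^\star+\Lambda^\star)^{-1}$ and the fixed point \eqref{fixed1}), it shows that $(\mathcal I-\alpha\mathbb E[f'(\Gamma^\star;Y)\otimes f'(\Gamma^\star;Y)])(\mathcal V^\star\otimes\mathcal V^\star)$ equals $\lambda_0\mathcal I\otimes\mathcal V^\star$ plus a manifestly positive term, and concludes that $\mathcal I-\alpha\mathbb E[f'\otimes f']$ has strictly positive eigenvalues, hence $\rho_{\rm AT}<1$. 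You instead symmetrize by conjugation with $(\mathcal V^\star)^{1/2}$, read $\alpha\mathbb E[S\otimes S]$ as the completely positive map $\mathcal W\mapsto\alpha\mathbb E[S\mathcal W S]$, and probe it at $\mathcal I$ via $S^2\leq -S$ together with the conjugated fixed point $\mathcal I+\alpha\mathbb E[S]=\lambda_0(\mathcal V^\star)^{-1}$. Your route buys something the paper's does not state explicitly: the quantitative bound $\rho_{\rm AT}\leq 1-\lambda_0/\Vert\mathcal V^\star\Vert_2$, which makes the degeneration as $\lambda_0\to 0$ (observed in Section~\ref{Simulations}) transparent. The checks you flag as routine (differentiability of the proximal map since $\mathcal V^\star+\Lambda^\star>\matr 0$, $\mu<1$ since $\lambda_0>0$) are indeed fine.

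One loose end in your final step: the estimate $\Vert\mathcal T^k(\mathcal W)\Vert_2\leq 2\mu^k\Vert\mathcal W\Vert_2$ is derived only for \emph{symmetric} $\mathcal W$, whereas $\rho_{\rm AT}$ is the spectral radius of $\alpha\mathbb E[S\otimes S]$ acting on all of $\RR^{K\times K}$; the antisymmetric matrices form a second invariant subspace of $\mathcal W\mapsto \mathbb E[S\mathcal W S]$ that the positive/negative-part splitting does not reach, so Gelfand's formula applied to your bound only controls the spectral radius of the restriction to the symmetric subspace. The gap is minor and the fix is one line: since $S$ is symmetric, $\alpha\mathbb E[S\otimes S]$ is a symmetric positive semi-definite $K^2\times K^2$ matrix, so $\rho_{\rm AT}=\max\{\alpha\,\mathbb E[{\rm tr}(\mathcal W^\top P\mathcal W P)]:\Vert\mathcal W\Vert_{\texttt F}=1\}$ with $P\bydef -S$, where $\mathcal W$ ranges over \emph{all} $K\times K$ matrices; then $\matr 0\leq P<\mathcal I$ gives ${\rm tr}(\mathcal W^\top P\mathcal W P)\leq {\rm tr}(\mathcal W^\top P\mathcal W)$, and your own inequality $-\alpha\mathbb E[S]\leq\mu\mathcal I$ yields $\alpha\mathbb E[{\rm tr}(\mathcal W^\top P\mathcal W)]\leq\mu\Vert\mathcal W\Vert_{\texttt F}^2$, hence $\rho_{\rm AT}\leq\mu<1$ directly — this even renders the iteration $\mathcal T^k(\mathcal I)\leq\mu^k\mathcal I$ unnecessary.
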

	
	\begin{theorem}\label{Th3}
		Let $\matr \omega ^\star $ be as in \eqref{opt}. Let $\matr X\sim_{\text{i.i.d.}}\mathcal N(\matr 0,\matr I/d)$. Suppose Assumptions~\ref{ass3} and \ref{ass4} hold.  Then, for any fixed (and large) $t\in \NN$ (independent of $d$) we have
		\begin{equation}
			\frac{\Vert \matr\omega^\star-(\matr r_0\mathcal B^\star+\matr u\sqrt{\mathcal C^\star}) \Vert_{{\rm F}}}{\sqrt d}<C\rho_{\rm AT}^{\frac t 2}+\Op{d^{-\frac 1 2}}\;,
		\end{equation}
		where $\rv_0$ as in \eqref{qr}  which is independent of the $d\times K$ random matrix
		$\matr u\sim_{\text{i.i.d.}}\mathcal N(\matr 0,\mathcal I)$, $\rho_{\rm AT}<1$ is as in Proposition~\ref{pat} and $C$ is an irrelevant constant independent of~$t$. 
	\end{theorem}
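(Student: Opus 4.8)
The plan is to show that for every fixed (large) $t$ the AMP iterate $\matr\omega^{(t+1)}$ lies within $\Op{d^{-1/2}}$ of the random point $\matr r_0\mathcal B^\star+\matr u\sqrt{\mathcal C^\star}$ and within $C\rho_{\rm AT}^{t/2}+\Op{d^{-1/2}}$ of the optimum $\matr\omega^\star$, and then to conclude by the triangle inequality. First I would verify that Assumption~\ref{ass4} implies the hypotheses of Proposition~\ref{Th1} and Theorem~\ref{Th2} for the denoiser \eqref{function}: since $l$ is three-times differentiable, convex, with $\Vert l''\Vert_2$ bounded and $\mathcal V^\star>\matr 0$, the map $f(\gamma;y)=m(\gamma;y)-\gamma$ is two-times differentiable in $\gamma$ and its Jacobian $f'=-l''(m)(\mathcal V^\star+l''(m))^{-1}$ is bounded in operator norm, so $f$ is Lipschitz; moreover $f(0;Y)=m(0;Y)=\Op{1}$ because $l'(\matr 0;Y)=\Op{1}$ and $\mathcal V^\star>\matr 0$; and Assumption~\ref{ass3} is exactly Assumption~\ref{ass2}. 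With $\mathcal B^{(1)}=\mathcal B^\star$, $\mathcal C^{(1,1)}=\mathcal C^\star$, Definition~\ref{SE} gives inductively $\mathcal B^{(t)}=\mathcal B^\star$, $\mathcal C^{(t,t)}=\mathcal C^\star$, and $\mathcal Q^{(t)}=\mathcal Q^\star\bydef\mathbb E[f'(\Gamma^\star;Y)]$ for all $t$. Proposition~\ref{Th1} then yields $\matr\omega^{(t+1)}\simeq\matr r_0\mathcal B^\star+\matr\psi^{(t+1)}$ with $\matr\psi^{(t+1)}\sim_{\text{i.i.d.}}\mathcal N(\matr 0,\mathcal C^\star)$ independent of $\matr r_0$; writing $\matr\psi^{(t+1)}=\matr u\sqrt{\mathcal C^\star}$ for a suitable $\matr u\sim_{\text{i.i.d.}}\mathcal N(\matr 0,\mathcal I)$ independent of $\matr r_0$ and recalling the meaning of $\simeq$, this gives $\Vert\matr\omega^{(t+1)}-(\matr r_0\mathcal B^\star+\matr u\sqrt{\mathcal C^\star})\Vert_\texttt{F}/\sqrt d=\Op{d^{-1/2}}$.

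The heart of the argument is to bound $\Vert\matr\omega^{(t)}-\matr\omega^\star\Vert_\texttt{F}/\sqrt d$, for which I would use strong convexity. The objective $F(\matr\omega)\bydef\sum_{i}l(\matr x_i^\top\matr\omega;y^{[i]})+\tfrac{\lambda_0}{2}\Vert\matr\omega\Vert_\texttt{F}^2$ in \eqref{opt} is $\lambda_0$-strongly convex, so $\Vert\matr\omega^{(t)}-\matr\omega^\star\Vert_\texttt{F}\leq\lambda_0^{-1}\Vert\nabla F(\matr\omega^{(t)})\Vert_\texttt{F}$ with $\nabla F(\matr\omega)=\matr X^\top l'(\matr X\matr\omega;\matr y)+\lambda_0\matr\omega$ (the rows of $l'(\matr X\matr\omega;\matr y)$ being $l'(\matr x_i^\top\matr\omega;y^{[i]})$). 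The key computation is to rewrite $\nabla F(\matr\omega^{(t)})$ through the AMP recursion \eqref{alg2}: from \eqref{alg21} and \eqref{function}, $\matr X\matr\omega^{(t)}=m(\matr\gamma^{(t)};\matr y)+(\matr f^{(t-1)}-\matr f^{(t)})$; from \eqref{flrelation}, $l'(m(\matr\gamma^{(t)};\matr y);\matr y)=-\matr f^{(t)}\mathcal V^\star$; and from \eqref{alg23}, $\matr X^\top\matr f^{(t)}=\matr\omega^{(t+1)}+\alpha\matr\omega^{(t)}\mathcal Q^\star$. Substituting and invoking the fixed-point relation \eqref{fixed1} in the form $\lambda_0\mathcal I-\alpha\mathcal Q^\star\mathcal V^\star=\mathcal V^\star$, all terms proportional to $\matr\omega^{(t)}$ cancel and one is left with
\[\nabla F(\matr\omega^{(t)})=-(\matr\omega^{(t+1)}-\matr\omega^{(t)})\mathcal V^\star+\matr X^\top\big(l'(\matr X\matr\omega^{(t)};\matr y)-l'(m(\matr\gamma^{(t)};\matr y);\matr y)\big).\]
Since $l'$ is globally Lipschitz (from bounded $\Vert l''\Vert_2$) and $f$ is $L_f$-Lipschitz, the second term is bounded by $\Vert\matr X\Vert_2 L_{l'}L_f\Vert\matr\gamma^{(t)}-\matr\gamma^{(t-1)}\Vert_\texttt{F}$, so dividing by $\sqrt d$, using $\sqrt n=\sqrt\alpha\,\sqrt d$, the operator-norm concentration $\Vert\matr X\Vert_2=\Op{1}$ of the Gaussian matrix, and Theorem~\ref{Th2} with $\tau=1$ at steps $t$ and $t-1$ (together with $\rho_{\rm AT}^{(t-1)/2}=\rho_{\rm AT}^{-1/2}\rho_{\rm AT}^{t/2}$) yields $\Vert\matr\omega^{(t)}-\matr\omega^\star\Vert_\texttt{F}/\sqrt d<C\rho_{\rm AT}^{t/2}+\Op{d^{-1/2}}$ with $C$ independent of $t$.

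Combining the two estimates via the triangle inequality and using $\rho_{\rm AT}<1$ (Proposition~\ref{pat}), so that $\rho_{\rm AT}^{(t+1)/2}<\rho_{\rm AT}^{t/2}$, gives the claimed bound. I expect the main obstacle to be the algebraic identity for $\nabla F(\matr\omega^{(t)})$ above: it works only because the denoiser \eqref{function} is the proximal operator of $l$ in the metric $\mathcal V^\star$ and because $\mathcal V^\star$ solves \eqref{fixed1}, so that the $\matr\omega^{(t)}$-dependent contributions annihilate exactly and the gradient is driven entirely by the memory-one AMP increments $\matr\omega^{(t+1)}-\matr\omega^{(t)}$ and $\matr\gamma^{(t)}-\matr\gamma^{(t-1)}$, which Theorem~\ref{Th2} controls. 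Verifying the Lipschitz conditions, the concentration of $\Vert\matr X\Vert_2$, and the $\mathcal L^p$ bookkeeping in products of $\Op{1}$ and $\Op{d^{-1/2}}$ quantities is routine. (Alternatively, for fixed $d$ Theorem~\ref{Th2} makes $\{\matr\omega^{(t)}\}$ Cauchy and its limit a fixed point of the AMP map, which equals $\matr\omega^\star$ by Remark~\ref{fixed-point}; but controlling the error terms as $\tau\to\infty$ is more delicate than the route above.)
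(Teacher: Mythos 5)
Your proposal is correct and follows essentially the same route as the paper: a strong-convexity bound on $\Vert\matr\omega^{(t)}-\matr\omega^\star\Vert_\texttt{F}$ via the gradient, an exact cancellation of the $\matr\omega^{(t)}$-terms using the proximal relation \eqref{flrelation}, the AMP recursion \eqref{alg23} and the fixed-point equation \eqref{fixed1}, control of the remaining one-step increments by Theorem~\ref{Th2}, and Proposition~\ref{Th1} plus the triangle inequality to pass to $\matr r_0\mathcal B^\star+\matr u\sqrt{\mathcal C^\star}$. The only (immaterial) difference is that you expand $l'$ around $m(\matr\gamma^{(t)};\matr y)$, obtaining the increment $(\matr\omega^{(t+1)}-\matr\omega^{(t)})\mathcal V^\star$ and a residual driven by $\matr f^{(t)}-\matr f^{(t-1)}$, whereas the paper expands around $m(\matr\gamma^{(t-1)};\matr y)$ and gets $(\matr\omega^{(t)}-\matr\omega^{(t-1)})(\lambda_0\mathcal I-\mathcal V^\star)$ with a residual in $\matr\gamma^{(t)}-\matr\gamma^{(t-1)}$.
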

	Before proceeding to the proof of Theorem~\ref{Th3}, we present the following high-dimensional analysis of the reconstruction error as a consequence of Theorem~\ref{Th3}:
	\begin{corollary}\label{corerror} Let $\matr \omega ^\star$ be as in \eqref{opt}.
		Suppose the premises of Theorem~\ref{Th3} hold. Then, as $d\to \infty$ we have 
		\begin{equation*}
			\frac 1 d\Vert \matr\omega^\star-\matr \omega_0\Vert_{\texttt{F}}^2-{\rm tr}((\mathcal B_0-\mathcal B^\star)^\top(\mathcal B_0-\mathcal B^\star)+\mathcal C^\star)\overset{a.s.}{\rightarrow}0\;,
		\end{equation*}
		where $\mathcal B^\star$ and $\mathcal C^\star$ are as in \eqref{fixed} and $\mathcal B_0\bydef \langle \rv_0,\matr \omega_0 \rangle$, see~\eqref{qr}.
		\begin{proof}
			See Appendix~\ref{proof_corerror}.
		\end{proof}
	\end{corollary}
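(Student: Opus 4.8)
The plan is to derive Corollary~\ref{corerror} directly from Theorem~\ref{Th3} together with standard concentration of Frobenius norms of i.i.d.\ Gaussian blocks. First I would expand
\[
\tfrac 1 d\Vert \matr\omega^\star-\matr\omega_0\Vert_\texttt{F}^2
= \tfrac 1 d\Vert \matr\omega^\star - (\matr r_0\mathcal B^\star+\matr u\sqrt{\mathcal C^\star})\Vert_\texttt{F}^2
+ \tfrac 2 d\langle \matr\omega^\star - (\matr r_0\mathcal B^\star+\matr u\sqrt{\mathcal C^\star}),\; \matr r_0\mathcal B^\star+\matr u\sqrt{\mathcal C^\star}-\matr\omega_0\rangle_{\texttt{F}}
+ \tfrac 1 d\Vert \matr r_0\mathcal B^\star+\matr u\sqrt{\mathcal C^\star}-\matr\omega_0\Vert_\texttt{F}^2,
\]
where $\langle\cdot,\cdot\rangle_\texttt{F}$ denotes the unnormalized trace inner product. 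By Theorem~\ref{Th3} the first term is $O(\rho_{\rm AT}^{t})+\Op{d^{-1}}$, and by Cauchy--Schwarz the cross term is bounded by (twice) the square root of the product of the first and last terms; since the last term will be shown to be $\Op{1}$ in the normalized sense (i.e.\ bounded in probability after dividing by $d$), the cross term is also $O(\rho_{\rm AT}^{t/2})+\Op{d^{-1/2}}$. Letting $d\to\infty$ and then $t\to\infty$ (using $\rho_{\rm AT}<1$ from Proposition~\ref{pat}) kills all the error terms, so it remains to identify the almost-sure limit of $\tfrac 1 d\Vert \matr r_0\mathcal B^\star+\matr u\sqrt{\mathcal C^\star}-\matr\omega_0\Vert_\texttt{F}^2$.

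For that limit I would use the QR relation \eqref{qr}: write $\matr\omega_0=\matr r_0\mathcal B_0$ with $\mathcal B_0=\langle\matr r_0,\matr\omega_0\rangle$ and $\langle\matr r_0,\matr r_0\rangle=\mathcal I$, so that
\[
\matr r_0\mathcal B^\star+\matr u\sqrt{\mathcal C^\star}-\matr\omega_0 = \matr r_0(\mathcal B^\star-\mathcal B_0)+\matr u\sqrt{\mathcal C^\star}.
\]
Expanding the Frobenius norm gives three pieces: $\tfrac 1 d\,{\rm tr}\!\big((\mathcal B^\star-\mathcal B_0)^\top\langle\matr r_0,\matr r_0\rangle d\,(\mathcal B^\star-\mathcal B_0)\big) = {\rm tr}((\mathcal B^\star-\mathcal B_0)^\top(\mathcal B^\star-\mathcal B_0))$ exactly (using $\tfrac 1 d\matr r_0^\top\matr r_0=\mathcal I$); the term $\tfrac 1 d\,{\rm tr}(\sqrt{\mathcal C^\star}^\top\matr u^\top\matr u\sqrt{\mathcal C^\star})\to {\rm tr}(\mathcal C^\star)$ almost surely by the strong law of large numbers applied entrywise to $\tfrac 1 d\matr u^\top\matr u\overset{a.s.}{\to}\mathcal I$; and the cross term $\tfrac 2 d\,{\rm tr}((\mathcal B^\star-\mathcal B_0)^\top\matr r_0^\top\matr u\sqrt{\mathcal C^\star})\to 0$ almost surely, since $\matr r_0^\top\matr u/d$ has i.i.d.\ mean-zero entries with variance $O(1/d)$ once we condition on $\matr r_0$ (the rows of $\matr r_0$ are normalized). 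Here one should note $\matr u$ is independent of $\matr r_0$ by construction (see the statement of Theorem~\ref{Th3}), which makes the conditioning step legitimate. Summing the three pieces yields the claimed limit ${\rm tr}((\mathcal B_0-\mathcal B^\star)^\top(\mathcal B_0-\mathcal B^\star)+\mathcal C^\star)$.

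The main obstacle is handling the interchange of limits $d\to\infty$ and $t\to\infty$ cleanly: Theorem~\ref{Th3} gives, for each fixed $t$, a bound of the form $C\rho_{\rm AT}^{t/2}+\Op{d^{-1/2}}$ on the normalized distance between $\matr\omega^\star$ and the Gaussian surrogate, but $\matr\omega^\star$ itself does not depend on $t$. The correct reading is that $\tfrac 1{\sqrt d}\Vert\matr\omega^\star-(\matr r_0\mathcal B^\star+\matr u\sqrt{\mathcal C^\star})\Vert_\texttt{F}$ is a single random quantity that, by taking $d\to\infty$ first (via \eqref{ascon}) and then $t\to\infty$, is shown to be $o(1)$ almost surely; equivalently, the surrogate $\matr r_0\mathcal B^\star+\matr u\sqrt{\mathcal C^\star}$ is an asymptotically tight approximant of $\matr\omega^\star$ in normalized Frobenius norm. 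Once this is phrased carefully, the rest is routine: the reconstruction error of $\matr\omega^\star$ inherits the almost-sure limit of the reconstruction error of the surrogate, which is the elementary Gaussian computation above. A secondary technical point is to confirm that $\mathcal B_0$ (hence the limit) is itself deterministic in the limit, which it is because $\mathcal C_0=\langle\matr\omega_0,\matr\omega_0\rangle$ is assumed fixed and $\mathcal B_0^\top\mathcal B_0=\langle\matr\omega_0,\matr r_0\rangle^\top\langle\matr r_0,\matr r_0\rangle^{-1}\langle\matr r_0,\matr\omega_0\rangle$ reduces to $\mathcal C_0$ by \eqref{qr}; only the quadratic form ${\rm tr}(\mathcal B_0^\top\mathcal B_0)={\rm tr}(\mathcal C_0)$ actually enters, so no further hypothesis is needed.
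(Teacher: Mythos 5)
Your proposal is correct and takes essentially the same route as the paper's proof in Appendix~\ref{proof_corerror}: approximate $\matr \omega^\star$ by the surrogate $\matr r_0\mathcal B^\star+\matr u\sqrt{\mathcal C^\star}$ via Theorem~\ref{Th3} (sending $d\to\infty$ for each fixed $t$, then $t\to\infty$ using $\rho_{\rm AT}<1$), kill the $\matr r_0^\top\matr u$ cross term by independence, and use $\langle \matr u\sqrt{\mathcal C^\star},\matr u\sqrt{\mathcal C^\star}\rangle\to\mathcal C^\star$, the paper merely packaging these steps through its $\mathcal L^p$-concentration lemmas and a triangle inequality on unsquared norms instead of your squared-norm expansion with Cauchy--Schwarz. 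Only your closing aside is slightly inaccurate---$\mathcal B_0$ enters the limit also through the cross term ${\rm tr}(\mathcal B_0^\top\mathcal B^\star)$, not merely via ${\rm tr}(\mathcal B_0^\top\mathcal B_0)={\rm tr}(\mathcal C_0)$---but since the stated limit keeps $\mathcal B_0$ explicitly, this has no bearing on the proof.
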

	\subsection{Proof of Theorem~\ref{Th3}}
	Since the optimization \eqref{opt} is $\lambda_0$-strongly convex \cite{boyd2004convex} (i.e., the spectral norm of the Hessian of the optimization is bounded below by $\lambda_0$) we can bound the distance between any point $\matr \omega\in \RR^{d\times K}$ and the
	optimal point $\matr\omega^\star$ as \cite[Eq. 9.11]{boyd2004convex}
	\begin{equation}
		\Vert \matr \omega-\matr \omega^\star\Vert_\texttt{F}
		\leq \frac {2} {\lambda_0} \Vert\matr G({\matr \omega})\Vert_\texttt{F}\label{bound1}
	\end{equation}
	where $\Gm$ is the gradient matrix of the optimization \eqref{opt} w.r.t. $\matr \omega$, i.e.,
	\begin{equation}
		\matr G({\matr \omega})=\lambda_0\matr \omega+\matr X^\top l'(\matr X^\top\matr\omega;\matr y)\;.\label{gradient}
	\end{equation}
	Let $\matr \omega\equiv\matr \omega^{(t)}$ for $t>1$ be constructed by the AMP dynamics \eqref{alg2} initialized with $\mathcal B^{(1)}=\mathcal B^\star $ and $\mathcal C^{(1,1)}=\mathcal C^\star $. We then write 
	\begin{align*}
		\matr G(\matr \omega^{(t)})&
		=\lambda_0\matr \omega^{(t)}+\matr X^\top l'(m(\matr \gamma^{(t-1)};\matr y)+(\matr \gamma^{(t)}-\matr \gamma^{(t-1)});\matr y)\nonumber\\
		&\overset{(a)}{=}\lambda_0\matr \omega^{(t)}+\matr X^\top l'(m(\matr \gamma^{(t-1)};\matr y);\matr y)+\matr X^\top \matr h^{(t)}\\
		&\overset{(b)}{=}\lambda_0\matr \omega^{(t)}+\matr X^\top  f(\matr\gamma^{(t)};\matr y)\mathcal V^{\star}+\matr X^\top \matr h^{(t)}\\
		&=\matr \omega^{(t)}(\lambda_0\mathcal I-\mathcal V^{\star})-\alpha\matr \omega^{(t-1)}\mathcal Q^{(t-1)}\mathcal V^\star +\matr X^\top \matr h^{(t)}\\
		&\overset{(c)}{=}(\matr \omega^{(t)}-\matr \omega^{(t-1)})(\lambda_0\mathcal I-\mathcal V^{\star}) +\matr X^\top \matr h^{(t)}\;.
	\end{align*}
	In step (a) we have carried out the mean-value theorem such that the rows of $\matr h^{(t)}$ read in the form 
	\begin{equation}
		( h^{(t)})^{[i]}\bydef ((\gamma^{(t)})^{[i]}-(\gamma^{(t-1})^{[i]})l''(\xi_i;y^{[i]}),~~\forall i\in[n]\;.
	\end{equation} 
	Note that 
	$\Vert \matr h^{(t)}\Vert_{\texttt{F}}\leq L_{l''}\Vert \matr\gamma^{(t)}-\matr \gamma^{(t-1)}\Vert_{\texttt{F}}$ where $L_{l''}$ denotes an upper bound of the spectral norm of the Hessian $l''(\theta;y)$. In step (b) we use \eqref{flrelation} and
	in step (c) we use the fact that given $\mathcal B^{(1)}=\mathcal B^\star$ and $\mathcal C^{(1,1)}=\mathcal C^\star$, we have from \eqref{vnice} that $\alpha \mathcal Q^{(t)}\mathcal V^\star=\lambda_0\mathcal I-\mathcal V^\star$ for all $t$. Thus, we have from \eqref{bound1}
	\begin{align}
		\Vert{\matr \omega^{(t)}}-\matr \omega^\star\Vert_\texttt{F}&\leq \frac 2{\lambda_0}\Vert\lambda_0\mathcal I-\mathcal V^{\star} \Vert_2\Vert\matr \omega^{(t)}-\matr \omega^{(t-1)}\Vert_\texttt{F} \nonumber \\
		&+\frac{L_{l''}}{\lambda_0}\Vert\matr X \Vert_2\Vert\matr \gamma^{(t)}-\matr \gamma^{(t-1)} \Vert_\texttt{F}\label{goodbound}\;. 
	\end{align}
	Here, $\Vert \cdot \Vert_2$ stands for the spectral norm of the matrix in the argument and we have e.g. from \cite[Theorem~2.7]{davidson2001local} that $\Vert \matr X\Vert_2 =\Op{1}$. Furthermore, it is easy to verify that all the premises of Theorem~\ref{Th2} are fulfilled by the premises of Theorem~\ref{Th3}.
	Then, the thesis is evident from  Theorem~\ref{Th2}.

	\section{Simulation Results}\label{Simulations}
	We consider the application of the convex optimization \eqref{opt} with the cross-entropy loss function \eqref{cel}. We generate $\matr \omega_0$ such that $\langle \matr \omega_0,\matr \omega_0\rangle=\mathcal I$ and $\matr y$ according to the log-likelihood $\ln p_0(y\vert\theta)=-l(\theta;y)$. The number of classes is $K=3$. We fix $\alpha=n/d=2$, which becomes critical as $\lambda_0\to 0$. We simulate the AMP dynamics~\eqref{alg2} using the \emph{Householder dice} implementation \cite{Yue21} which allows to simulate the dynamics on a standard personal computer up to $d=10^{6}$ (instead of $10^4$ with a direct implementation).
	We have the rate of convergence 
	\begin{align}
		\lim_{t\to \infty}\lim_{d\to\infty}\frac{\Vert \matr\omega^{(t+1)}-\matr \omega^{(t)}\Vert_{\texttt{F}}^2}{\Vert \matr \omega^{(t)}-\matr \omega^{(t-1)}\Vert_{\texttt{F}}^2}&\overset{a.s.}{=}
		\rho_{\rm AT}\;.\label{at_con}
	\end{align}
	\begin{figure}[t]
		\centering
			\includegraphics[width=1.02\linewidth]{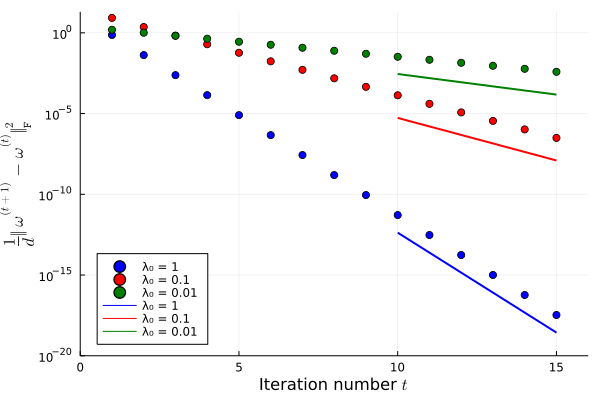}
		\caption{The convergence of the AMP dynamics with $d=10^{5}$ and $\alpha=2$. The straight lines on the interval $10\leq t\leq 15$ represent $\rho_{\rm AT}^{t}$. The experiments are based on single instances (for each $\lambda_0$) of the AMP dynamics.} 
		\label{fig.MMSE}    	
	\end{figure}We illustrate this result in Figure~1.  
	The numerical results suggest that for $\alpha=2$ we have $\rho_{\rm AT}\to 1$ as $\lambda_0\to0$ (and we observe that obtaining the numerical value of $\rho_{\rm AT}$ is more difficult the smaller $\lambda_0$ is and we were unable to obtain a numerical value of $\rho_{\rm AT}$ for $\lambda_0=0$). This is similar to the so-called \emph{Gardner instability}, see \cite[Eq (4)]{gardner1989three}.

	\section{Conclusion}\label{Conclusion}
	We have presented a convergence analysis of the dynamics of an AMP with nonseparable multivariate nonlinearities and its application to multi-class classification. The analysis reveals a necessary (and sufficient) condition for dynamical stability, i.e., $\rho_{\rm AT}<1$ (see \eqref{AT}). We have shown that this condition always holds for ridge-regularized ``softmax'' regression type applications, which are strongly convex problems.
	On the other hand, $\rho_{\rm AT}<1$ could only hold for some ``region'' of model parameter values of the non-convex (or not strictly convex) problems, we expect that the stability criteria could be an important aspect for the analysis of non-convex problems.
	
	It would be interesting to extend the convergence analysis to the generalized AMP setting. This would allow us to analyze, for example, the convex optimization \eqref{opt} with a generalized nonlinear regularization term. Details are discussed elsewhere.
	\section*{Acknowledgment}
	This work was supported by the German Research Foundation, Deutsche Forschungsgemeinschaft (DFG), under Grant ‘RAMABIM’ with No. OP 45/9-1, by the US
	National Science Foundation under Grant CCF-1910410, and by the Harvard FAS Dean’s Competitive Fund for Promising Scholarship.
	\appendices
\section{Concentration Inequalities with \texorpdfstring{$\mathcal L^p$}{TEXT} norm} \label{preliminariesop} 
Here we present some elementary results on concentration inequalities with $\mathcal L^p$ norm. 
\begin{lemma}\label{lemma:op_properties}
Consider the (scalar) random variables ${A} = \Op{\kappa_d}$ and ${B} = \Op{\tilde\kappa_d}$, where $\kappa_d$ and $\tilde \kappa_d$ are two positive sequences indexed by $d$. Then the following properties hold:
	\begin{align}
			{A + B} &= \Op{\max(\kappa_d, \tilde \kappa_d)}\label{eq:op_sum}\\
			{AB}&= \Op{\kappa_d \tilde \kappa_d}\label{eq:op_prod}\\
			\sqrt{A}&=\Op{\sqrt{\kappa_d}}.\label{eq:op_sqrt_pos}
		\end{align}
		Moreover, let $C > 0$ be a constant and ${A}=\Op{\kappa_d}$ with $\vert {A}\vert\leq C$. Then, we have 
		\begin{align}
			\sqrt{C + {A}} -\sqrt{C} &= \Op{\kappa_d}\;.\label{eq:op_sqrt}
		\end{align}
		\begin{proof}
The results \eqref{eq:op_sum} and \eqref{eq:op_prod} follow from the Minkowski inequality and H\"older inequality, (i.e., $\Vert {A}+{B} \Vert_{\mathcal L^p}\leq\Vert {A}\Vert_{\mathcal L^p}+\Vert{B} \Vert_{\mathcal L^p}$ and $\Vert {A}{B} \Vert_{\mathcal L^p}\leq\Vert {A}\Vert_{\mathcal L^{2p}}\Vert{B} \Vert_{\mathcal L^{2p}}$), respectively. The result \eqref{eq:op_sqrt_pos} follows from $ \Vert {A}^{1/2} \Vert_{\mathcal L^{p}}\leq \Vert {A}^{1/2} \Vert_{\mathcal L^{2p}}=\Vert {A} \Vert_{\mathcal L^{p}}^{\frac 1 2}$. Finally, the result \eqref{eq:op_sqrt} is evident:
  \begin{align*}
\vert\sqrt{C + {A}} -\sqrt{C}\vert= \frac{\vert {A}\vert}{\sqrt{C + {A}} +\sqrt{C}}\leq \frac{\vert {A}\vert}{\sqrt{C}}=\Op{\kappa_d}\;.
  \end{align*} 
\end{proof}
\end{lemma}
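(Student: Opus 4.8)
The plan is to read each of the four displays directly off the definition $A=\Op{\kappa_d}\iff$ for every $p\in\NN$ there is a constant $C_p$ with $\Vert A\Vert_{\mathcal L^p}\le C_p\kappa_d$, invoking only three elementary facts about $\mathcal L^p$ norms on a probability space: Minkowski's inequality, H\"older's inequality, and the monotonicity $\Vert\cdot\Vert_{\mathcal L^p}\le\Vert\cdot\Vert_{\mathcal L^q}$ whenever $p\le q$. In each case one fixes an arbitrary $p$, produces a bound of the form (constant)$\times\kappa_d$, and concludes since $p$ was arbitrary.

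For \eqref{eq:op_sum}, Minkowski gives $\Vert A+B\Vert_{\mathcal L^p}\le\Vert A\Vert_{\mathcal L^p}+\Vert B\Vert_{\mathcal L^p}\le C_p\kappa_d+\tilde C_p\tilde\kappa_d\le(C_p+\tilde C_p)\max(\kappa_d,\tilde\kappa_d)$, which is exactly $A+B=\Op{\max(\kappa_d,\tilde\kappa_d)}$. For \eqref{eq:op_prod}, apply H\"older at exponent level $p$ with the conjugate pair $(2,2)$, i.e.\ $\Vert AB\Vert_{\mathcal L^p}\le\Vert A\Vert_{\mathcal L^{2p}}\Vert B\Vert_{\mathcal L^{2p}}\le C_{2p}\tilde C_{2p}\,\kappa_d\tilde\kappa_d$. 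For \eqref{eq:op_sqrt_pos}, where $\sqrt A$ presupposes $A\ge0$, use monotonicity followed by the change of exponent $\Vert\sqrt A\Vert_{\mathcal L^p}\le\Vert\sqrt A\Vert_{\mathcal L^{2p}}=(\mathbb E A^p)^{1/(2p)}=\Vert A\Vert_{\mathcal L^p}^{1/2}\le C_p^{1/2}\sqrt{\kappa_d}$.

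For \eqref{eq:op_sqrt}, the hypothesis $\vert A\vert\le C$ guarantees $C+A\ge0$, so the identity $\sqrt{C+A}-\sqrt C=A/(\sqrt{C+A}+\sqrt C)$ is valid; bounding the denominator from below by $\sqrt C$ produces the deterministic inequality $\vert\sqrt{C+A}-\sqrt C\vert\le\vert A\vert/\sqrt C$, and taking $\mathcal L^p$ norms yields $\Vert\sqrt{C+A}-\sqrt C\Vert_{\mathcal L^p}\le\Vert A\Vert_{\mathcal L^p}/\sqrt C\le(C_p/\sqrt C)\kappa_d$, i.e.\ $\Op{\kappa_d}$.

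There is essentially no obstacle here; the only points deserving a moment's attention are that \eqref{eq:op_sqrt_pos} needs $A\ge0$ for the square root to be defined and uses that the ambient measure is a probability measure (so that $\Vert\cdot\Vert_{\mathcal L^p}\le\Vert\cdot\Vert_{\mathcal L^{2p}}$), and that in \eqref{eq:op_sqrt} the boundedness hypothesis $\vert A\vert\le C$ is used twice --- once to keep $C+A$ in the domain of $\sqrt{\,\cdot\,}$, and once to stop the random denominator $\sqrt{C+A}+\sqrt C$ from degenerating, which is what turns the estimate into the genuinely deterministic bound quoted above.
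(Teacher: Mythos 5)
Your proposal is correct and follows essentially the same route as the paper: Minkowski for \eqref{eq:op_sum}, H\"older with the pair $(2p,2p)$ for \eqref{eq:op_prod}, monotonicity of $\mathcal L^p$ norms plus the exponent change $\Vert\sqrt A\Vert_{\mathcal L^{2p}}=\Vert A\Vert_{\mathcal L^p}^{1/2}$ for \eqref{eq:op_sqrt_pos}, and the conjugate-expression identity with the denominator bounded below by $\sqrt C$ for \eqref{eq:op_sqrt}. The only difference is that you make explicit the implicit hypotheses ($A\geq 0$ for the square root, the probability measure for norm monotonicity), which the paper leaves unstated.
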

\begin{lemma}\label{keycon}\cite[Lemma~7.8]{erdHos2017dynamical}
For $m\asymp d$, consider a random vector $\matr a\in \RR^{m}$ where $\av\sim_\text{i.i.d.} A$ and ${A}=\Op{1}$. Then,  
\[\frac 1{m} \sum_{i\in[m]}{a}_i=\mathbb E[{A}]+\Op{d^{-1/2}}\;.\]
	\end{lemma}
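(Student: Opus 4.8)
The plan is to prove the sharper quantitative statement that $\Vert m^{-1}\sum_{i\in[m]} a_i-\mathbb E[A]\Vert_{\mathcal L^p}\leq C_p' m^{-1/2}$ for every $p\in\NN$; since $m\asymp d$ and the $\mathcal L^p$ norms are monotone in $p$ (so it suffices to treat $p$ ranging over even integers), this is precisely the assertion $\frac1m\sum_{i\in[m]}a_i=\mathbb E[A]+\Op{d^{-1/2}}$. First I would center the summands: put $Z_i\bydef a_i-\mathbb E[A]$, so that the $Z_i$ are i.i.d., mean zero, and --- using the triangle inequality together with $\vert\mathbb E[A]\vert\leq\Vert A\Vert_{\mathcal L^p}$ --- satisfy $\Vert Z_i\Vert_{\mathcal L^p}\leq 2C_p$, where the $C_p$ are the constants furnished by the hypothesis $A=\Op{1}$. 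The goal then reduces to showing $\Vert\sum_{i\in[m]}Z_i\Vert_{\mathcal L^p}=\Op{m^{1/2}}$.

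For the core estimate, fix $p=2q$ with $q\in\NN$ and expand
\[
\mathbb E\Big(\sum_{i\in[m]}Z_i\Big)^{2q}=\sum_{i_1,\dots,i_{2q}\in[m]}\mathbb E[Z_{i_1}Z_{i_2}\cdots Z_{i_{2q}}].
\]
Independence and $\mathbb E[Z_i]=0$ annihilate every tuple in which some index value occurs exactly once; the surviving tuples are those whose multiset of indices has each distinct value repeated at least twice, hence involve at most $q$ distinct indices, so there are at most $\kappa_q\,m^q$ of them for a constant $\kappa_q$ depending only on $q$. For a surviving tuple, grouping equal indices and applying H\"older with exponents $2q/r_i$ (where $r_i\geq2$ is the multiplicity of the $i$th distinct index and $\sum_i r_i=2q$) bounds the expectation by $\prod_i\Vert Z_i\Vert_{\mathcal L^{2q}}^{r_i}\leq(2C_{2q})^{2q}$. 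Therefore $\mathbb E(\sum_i Z_i)^{2q}\leq\kappa_q(2C_{2q})^{2q}m^q$, whence $\Vert\sum_i Z_i\Vert_{\mathcal L^{2q}}\leq\kappa_q^{1/(2q)}(2C_{2q})\,m^{1/2}$; dividing by $m$ and invoking $m\asymp d$ completes the proof.

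The only delicate point is the combinatorial bookkeeping in the last display --- checking that the constraint ``each distinct index appears $\geq2$ times among $2q$ slots'' genuinely forces $\leq q$ distinct indices and yields an $O(m^q)$ count with a constant depending only on $q$, and that the grouped H\"older bound is uniform in $m$. This is the standard moment argument behind the weak law of large numbers with $m^{-1/2}$ rate and involves nothing subtle beyond careful counting; a reader who prefers may bypass it by invoking the Marcinkiewicz--Zygmund inequality, which gives $\mathbb E\vert\sum_i Z_i\vert^{p}\leq B_p\,\mathbb E\big(\sum_i Z_i^2\big)^{p/2}$, followed by one application of Minkowski in $\mathcal L^{p/2}$ yielding $\mathbb E\big(\sum_i Z_i^2\big)^{p/2}\leq\big(m\Vert Z_1\Vert_{\mathcal L^p}^2\big)^{p/2}$, which leads to the same $\Op{m^{1/2}}$ bound. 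In either route the whole content is the $m^{1/2}$ scaling of $\Vert\sum_i Z_i\Vert_{\mathcal L^p}$; normalizing by $m\asymp d$ converts it into the claimed $\Op{d^{-1/2}}$.
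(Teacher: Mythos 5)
Your proof is correct. Note, however, that the paper does not prove this lemma at all: it is quoted verbatim with a citation to Lemma~7.8 of the Erd\H{o}s--Yau reference, where such $\mathcal L^p$-concentration bounds for averages of i.i.d.\ variables with all moments finite are part of the standard toolkit. What you supply is therefore a self-contained replacement for that citation, and your argument is essentially the canonical one underlying it: center the variables, bound $\Vert\sum_i Z_i\Vert_{\mathcal L^{2q}}$ by $O(m^{1/2})$ via the even-moment expansion (only index patterns with every value repeated at least twice survive, giving at most $O(m^q)$ terms, each bounded by generalized H\"older through $\Vert Z\Vert_{\mathcal L^{2q}}^{2q}$), then normalize by $m\asymp d$ and use monotonicity of $\mathcal L^p$ norms to cover all $p$. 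The Marcinkiewicz--Zygmund shortcut you mention is equally valid and avoids the counting. The only thing your write-up buys beyond the paper's citation is explicitness of the constants $C_p'$ (depending on $p$ and the moment bounds of $A$ only), which is exactly what the $\Op{\cdot}$ notation of the paper requires; there is no gap.
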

	
\begin{lemma}\label{eq:ip_concentration}
		For $m\asymp d$, consider the random vectors $\av,\bv\in \RR^{m}$ where  $ \av\sim_\text{i.i.d.} A$ and  $ \bv\sim_\text{i.i.d.} B$  with ${A}=\Op{1}$ and $B=\Op{1}$. Then, for any $\hat{\av}\simeq\av$ and ${\hat \bv}\simeq \bv$, we have
		\begin{equation}
			\langle \hat{ \av},{\hat \bv}\rangle = \mathbb E[A B] + \Op{d^{-\frac 1 2}}\;.
		\end{equation}
		\begin{proof}
			Let ${\matr \delta_a}\bydef\hat{\av}-{\av} $ and ${\matr \delta_b}\bydef\hat{\bv}-{\bv} $ with noting that e.g. ${\matr \delta_{a}}=\Op{1}$. From Lemma~\ref{keycon} we have $ {\av}=\Op{\sqrt{d}}$ and $ {\bv}=\Op{\sqrt{d}}$. Hence, from the properties \eqref{eq:op_sum} and \eqref{eq:op_prod}  we get
			\begin{equation}
			\langle {\hat \av},{\hat \bv}\rangle-\langle{ \av},{ \bv}\rangle=\langle{ \av},{\matr\delta_b}\rangle+\langle{ \bv},{\matr\delta_a}\rangle+\langle{\matr \delta_a},{\matr\delta_b}\rangle =\Op{d^{-\frac 1 2}}\;.
			\end{equation}
			Also, from \eqref{eq:op_prod} we have
			$AB=\Op{1}$; so that from Lemma~\ref{keycon} we get
			$\langle { \av},{\bv}\rangle= \mathbb E[AB] + \Op{d^{-\frac 1 2}}$. This completes the proof.
		\end{proof}	
	\end{lemma}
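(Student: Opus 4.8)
The plan is to separate $\hat\av$ and $\hat\bv$ into their ``exact'' i.i.d.\ parts plus perturbations of Frobenius norm $\Op{1}$, and then reduce everything to two invocations of Lemma~\ref{keycon}: one for the genuine i.i.d.\ inner product $\langle\av,\bv\rangle$, and one --- applied to the componentwise square of $\av$ --- to establish that $\Vert\av\Vert_2$ has size $\Op{\sqrt d}$, which is exactly what makes the perturbation terms $\Op{d^{-\frac12}}$. The one point worth flagging at the outset is that $\hat\av$ and $\hat\bv$ need not have i.i.d.\ rows, so Lemma~\ref{keycon} cannot be applied to $\langle\hat\av,\hat\bv\rangle$ directly; the exact/perturbation splitting is precisely the device that circumvents this.

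Concretely, I would set $\matr\delta_a\bydef\hat\av-\av$ and $\matr\delta_b\bydef\hat\bv-\bv$, which satisfy $\Vert\matr\delta_a\Vert_2=\Op{1}$ and $\Vert\matr\delta_b\Vert_2=\Op{1}$ by the definition~\eqref{eq:equiv} of $\simeq$ (for vectors, $\Vert\cdot\Vert_{\texttt{F}}=\Vert\cdot\Vert_2$), and expand by bilinearity of the normalized inner product,
\[
\langle\hat\av,\hat\bv\rangle-\langle\av,\bv\rangle=\langle\av,\matr\delta_b\rangle+\langle\bv,\matr\delta_a\rangle+\langle\matr\delta_a,\matr\delta_b\rangle .
\]
I would then show $\Vert\av\Vert_2=\Op{\sqrt d}$: since $a_i^2\sim_{\text{i.i.d.}}A^2$ with $A^2=\Op{1}$ by~\eqref{eq:op_prod}, Lemma~\ref{keycon} gives $\tfrac1m\Vert\av\Vert_2^2=\mathbb E[A^2]+\Op{d^{-\frac12}}=\Op{1}$, hence $\Vert\av\Vert_2=\Op{\sqrt m}=\Op{\sqrt d}$ via~\eqref{eq:op_sqrt_pos} and $m\asymp d$; the same bound holds for $\bv$. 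Cauchy--Schwarz together with $m\asymp d$ then gives $\vert\langle\av,\matr\delta_b\rangle\vert\le\tfrac1m\Vert\av\Vert_2\Vert\matr\delta_b\Vert_2=\Op{d^{-\frac12}}$ (using~\eqref{eq:op_prod}), and similarly $\langle\bv,\matr\delta_a\rangle=\Op{d^{-\frac12}}$ and $\langle\matr\delta_a,\matr\delta_b\rangle=\Op{d^{-1}}$; summing via~\eqref{eq:op_sum} yields $\langle\hat\av,\hat\bv\rangle-\langle\av,\bv\rangle=\Op{d^{-\frac12}}$.

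Finally, since $a_ib_i\sim_{\text{i.i.d.}}AB$ with $AB=\Op{1}$ (again by~\eqref{eq:op_prod}), Lemma~\ref{keycon} applied to the product sequence gives $\langle\av,\bv\rangle=\tfrac1m\sum_{i\in[m]}a_ib_i=\mathbb E[AB]+\Op{d^{-\frac12}}$, and one last use of~\eqref{eq:op_sum} combines the two displays into the asserted identity. I do not expect a genuine obstacle here; the only care required is in the $\mathcal L^p$-order bookkeeping --- in particular noticing that $\Vert\av\Vert_2$ scales like $\sqrt d$ rather than $O(1)$, so that after normalizing by $m\asymp d$ the cross terms come out at $\Op{d^{-\frac12}}$ and not merely $\Op{1}$.
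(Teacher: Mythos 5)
Your proposal is correct and follows essentially the same route as the paper's own proof: the same splitting $\hat\av=\av+\matr\delta_a$, $\hat\bv=\bv+\matr\delta_b$, the same bilinear expansion with the cross terms controlled via $\Vert\av\Vert_2,\Vert\bv\Vert_2=\Op{\sqrt d}$ (obtained from Lemma~\ref{keycon} and the $\mathcal L^p$ calculus of Lemma~\ref{lemma:op_properties}), and the same final application of Lemma~\ref{keycon} to the i.i.d.\ products $a_ib_i$ with $AB=\Op{1}$. You merely spell out the Cauchy--Schwarz and $\Op{\sqrt d}$ steps that the paper leaves implicit, which is fine.
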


\begin{lemma}\label{rem1}
For $m\asymp d$ let the random vectors $\matr g, \matr r\in \R^{m\times K}$ be independent with $\matr g\sim_\text{i.i.d.}\mathcal N(0,\mathcal I)$ and $\langle \matr r,\matr r\rangle=\mathcal I$. Then, $\langle \matr r,\matr g \rangle =\Op{d^{-\frac 1 2}}$.
\begin{proof}
Let $\mathcal G\bydef\sqrt m\langle \rv,\gv \rangle$. Notice that $\mathcal G\sim_{\text{i.i.d.}}\mathcal N(\matr 0,\mathcal I)$. Hence, $\mathcal G=\Op{1}$ which completes the proof. 
\end{proof}
\end{lemma}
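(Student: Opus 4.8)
The plan is to reduce the statement to the rotational invariance of the standard Gaussian. By the definition \eqref{eq:nip}, the hypothesis $\langle \matr r,\matr r\rangle=\mathcal I$ reads $\matr r^\top\matr r=m\,\mathcal I$; equivalently, the $m\times K$ matrix $\Um\bydef\matr r/\sqrt m$ has orthonormal columns (this is well-posed for $d$ large, since $K$ is fixed and $m\asymp d$). Therefore $\langle\matr r,\matr g\rangle=\tfrac1m\matr r^\top\matr g=\tfrac1{\sqrt m}\,\Um^\top\matr g$, and it suffices to show that $\mathcal G\bydef\Um^\top\matr g$ satisfies $\mathcal G=\Op{1}$ and then to divide by $\sqrt m\asymp\sqrt d$.

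First I would condition on $\matr r$ (hence on $\Um$). Since $\matr g$ is independent of $\matr r$ and $\matr g\sim_{\text{i.i.d.}}\mathcal N(\matr 0,\mathcal I)$ has all entries i.i.d.\ $\mathcal N(0,1)$, each column $\matr g_k$ of $\matr g$ is $\mathcal N(\matr 0,\matr I_m)$, so $\Um^\top\matr g_k\sim\mathcal N(\matr 0,\Um^\top\Um)=\mathcal N(\matr 0,\mathcal I)$, and the $K$ columns remain mutually independent. Hence, conditionally on $\matr r$, the $K\times K$ matrix $\mathcal G$ has i.i.d.\ standard Gaussian entries. The key observation is that this conditional law \emph{does not involve} $\matr r$; averaging over the law of $\matr r$, the unconditional law of $\mathcal G$ is again that of a $K\times K$ matrix with i.i.d.\ $\mathcal N(0,1)$ entries, and in particular it does not depend on $d$.

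Passing to the $\Op{\cdot}$ notation is then routine: a standard Gaussian has finite moments of every order, $K$ is fixed, and the law of $\mathcal G$ does not vary with $d$, so $\Vert\mathcal G\Vert_{\texttt{F}}$ has $\mathcal L^p$ norm bounded by a finite constant $C_p$ for each $p\in\NN$, uniformly in $d$; that is, $\mathcal G=\Op{1}$. Finally, combining $m\asymp d$ with \eqref{eq:op_prod} (viewing the deterministic factor $1/\sqrt m=\mathcal O(d^{-1/2})$ as a degenerate random variable) gives $\langle\matr r,\matr g\rangle=\mathcal G/\sqrt m=\Op{d^{-1/2}}$, as claimed.

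There is essentially no difficult step here; the only point that genuinely needs to be articulated is that the conditional distribution of $\mathcal G$ given $\matr r$ is free of $\matr r$ — this is what allows the conditioning to be dropped and, more importantly, what makes the resulting moment bounds uniform in $d$, which is precisely what the $\Op{\cdot}$ notation demands.
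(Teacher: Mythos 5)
Your proposal is correct and follows essentially the same route as the paper: you define $\mathcal G=\sqrt m\langle\rv,\gv\rangle$, observe by conditioning on $\rv$ (via rotational invariance/orthonormality of the columns of $\rv/\sqrt m$) that $\mathcal G$ has i.i.d.\ standard Gaussian entries independently of $d$, hence $\mathcal G=\Op{1}$, and divide by $\sqrt m\asymp\sqrt d$. The only difference is that you spell out the conditioning step that the paper leaves implicit in the assertion $\mathcal G\sim_{\text{i.i.d.}}\mathcal N(\matr 0,\mathcal I)$.
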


\newcommand{\secplemma}{Proof of Lemma~\ref{TContradiction}}
\section{\secplemma}\label{dTContradiction}
For the sake of notational compactness, let
\begin{equation}
f_0(U)\bydef f(G_0\mathcal B^\star+U;Y). 
\end{equation}
It is also useful to write the mapping \eqref{mappin} as
\begin{equation}
\mathcal T(\mathcal X)=\alpha\mathbb E[f_0(\Psi_1)^\top f_0(\Psi_2)] 
\end{equation}
where $\Psi_1$ and $\Psi_2$ are zero-mean Gaussian vectors independent of the field $(Y,G_0$) with $\mathbb E[\Psi_m^\top\Psi_m]=\mathcal C^\star$ for $m=1,2$ and $\mathbb E[\Psi_1^\top\Psi_2]=\mathcal X$. In particular, by using the characteristic-function representation of the Gaussian distribution we write
\begin{align*}
\mathcal T(\mathcal X)=c\int{\rm d}{\rm P}(Y,G_0){\rm d}U_1 {\rm d}U_2 {\rm d}\Psi_1{\rm d}\Psi_2\; f_0(\Psi_1)^\top f_0(\Psi_2)\nonumber
\\ \times {\rm e}^{-{\rm i}(U_1\Psi_1^\top+U_2\Psi_2^\top)-\frac 1 2 U_1\mathcal C^\star U_1^\top-\frac 1 2 U_2\mathcal C^\star U_2^\top}{\rm e}^{-U_1 \mathcal X U_2^\top}
\end{align*}
where ${\rm d}{\rm P}(Y,G_0)\bydef {\rm d}Y{\rm d}{G}_0~p_0(Y\vert G_0\sqrt{\mathcal C_0})\textswab{g}(G_0\vert \matr 0,\mathcal I)$ and $c\bydef \alpha/(2\pi)^{2K}$. Also, one can verify that the differentiation w.r.t. $\mathcal X_{kk'}$ is interchangeable with the integral above, see e.g. \cite[Lemma~2]{palomar2005gradient}. We will perform differentiation with respect to the symmetric matrix $\mathcal X$ and thereby we need to take the symmetry into account. To that end, we define the so-called elimination and duplication matrices. 
\begin{definition}\cite{magnus1980elimination}
Consider a $K\times K$ matrix $\mathcal X=\mathcal X^\top$. Then, $\overline{\mathcal X}_p$ denotes the $\frac 1  2K(K+1)\times 1$ vector obtained from the $K^2\times 1$ vector $\overline{\mathcal X}$ (see Definition~\ref{vectorization}) by eliminating all supra-diagonal elements of $\mathcal X$. E.g., when $K=3$, we have 
\begin{align*}
\overline{\mathcal X}&=(\mathcal X_{11},\mathcal X_{21},\mathcal X_{31},\mathcal X_{12},\mathcal X_{22},\mathcal X_{32},\mathcal X_{13},\mathcal X_{23}\mathcal X_{33})^\top\\
\overline{\mathcal X}_p&=(\mathcal X_{11},\mathcal X_{21},\mathcal X_{31},\mathcal X_{22},\mathcal X_{32},\mathcal X_{33})^\top.
\end{align*}
\end{definition}

For each $K$ there is a unique $\frac 1  2K(K+1)\times K^2$ projection matrix $\mathcal P$ such that 
\begin{equation}
\overline{\mathcal X}_p=\mathcal P\overline{\mathcal X}. 
\end{equation}
Moreover, for each $K$ there exists a unique $K^2\times \frac 1  2K(K+1)$  ``dublication'' matrix $\mathcal D$ such that 
\begin{equation}
\overline{\mathcal X}=\mathcal D\overline{\mathcal X}_p.
\end{equation}
For the explicit definitions of the elimination and duplication matrices we refer to \cite{magnus1980elimination}. In particular, we will solely need the following property 
\begin{equation}
\mathcal D\mathcal P\overline{\mathcal X}=\overline{\mathcal X}, \quad \text{if}~~\mathcal X=\mathcal X^\top.
\end{equation}
Finally, it is useful to note that 
\begin{align}
\frac{\partial{\rm e}^{-U_1 \mathcal X U_2^\top}}{\partial \bar{\mathcal X}}&= \frac{\partial{\rm e}^{(\overline{-U_2^\top U_1})^\top \overline{\mathcal X}}}{\partial \bar{\mathcal X}}=\frac{\partial \overline{\mathcal X}_p}{\partial \overline{\mathcal X}}\frac{\partial{\rm e}^{(\overline{-U_2^\top U_1})^\top \mathcal D \overline{\mathcal X}_p}}{\partial \overline{\mathcal X}_p}\\
&=\mathcal P^\top \mathcal D^\top (\overline{-U_2^\top U_1})\\
&=\mathcal P^\top \mathcal D^\top({\rm i} U_1)^\top\otimes ({\rm i} U_2)^\top.
\end{align}
By using this identity we have for all $k,k'\in[K]$
\begin{align}
{[{\mathcal T}' (\mathcal X)]_{kk'}}&\bydef\frac{\partial [\mathcal T (\mathcal X)]_{kk'}}{\partial \overline {\mathcal X}}\\
&=\alpha \mathcal P^\top \mathcal D^\top \mathbb E[({\rm i} U_1)^\top\otimes ({\rm i} U_2)^\top  f_{0k}(\Psi_1)f_{0k'}(\Psi_2)]\\
&=\alpha \mathcal P^\top \mathcal D^\top \mathbb  E[f'_{0k}(\Psi_1)\otimes f_{0k'}'(\Psi_2)]\label{usethis}
\end{align}
where for short we adopt the notational setups for $k\in[K]$
\begin{equation}
f_{0k}(\Psi)\bydef (f_0(\Psi))_k \quad \text{and}\quad f'_{0k}(\Psi)\bydef \frac{\partial f_{0k}(\Psi)}{\partial \Psi}.
\end{equation}

\subsection{The proof of the bound (a)} 
For any $\matr 0\leq \mathcal Y\leq\mathcal X\leq \mathcal C^\star$, we introduce the trivial interpolation for $q\in[0,1]$
\begin{equation}
\mathcal G(q)\bydef\mathcal T(q\mathcal X+(1-q)\mathcal Y).
\end{equation} 
Then, by the general mean-value theorem of \cite{mcleod1965mean} we write
\begin{align}
\mathcal T(\mathcal X)-\mathcal T(\mathcal Y)&=\mathcal G(1)-\mathcal G(0)\\
&=\sum_{1\leq i\leq K^2}\lambda_i\mathcal G'(q_i)\label{gmvt}
\end{align}
for some $q_i\in (0,1)$ and $\sum_{1\leq i\leq K^2}\lambda_i=1$.  In particular, by the chain rule, we have for all $k,k'\in[K]$
\begin{equation}
[\mathcal G'(q)]_{kk'}
=([{\mathcal T}'(\mathcal S)]_{kk'})^\top\overline{\mathcal X-\mathcal Y}
\end{equation} 
where $\mathcal S=q\mathcal X+(1-q)\mathcal Y$. 
Then, from \eqref{usethis} we write
\begin{align}
[\mathcal G'(q)]_{kk'}&=\alpha \mathbb E[f_{0k}'(G\sqrt{\mathcal S}+ G'\sqrt{\mathcal C^\star-\mathcal S}) \nonumber \\
&~~~~~\otimes f_{0k'}'(G\sqrt{\mathcal S}+ G''\sqrt{\mathcal C^\star-\mathcal S})]^\top \mathcal D \mathcal P \overline{X-Y} \nonumber \\
&=\alpha \mathbb E[f_{0k}'(G\sqrt{\mathcal S}+ G'\sqrt{\mathcal C^\star-\mathcal S}))\nonumber \\
&~~~~~\otimes f_{0k'}'(G\sqrt{\mathcal S}+ G''\sqrt{\mathcal C^\star-\mathcal S})]^\top \overline{X-Y}\nonumber\\
&=\alpha \mathbb E[f_{0k'}'(G\sqrt{\mathcal S}+ G'\sqrt{\mathcal C^\star-\mathcal S})(\mathcal X-\mathcal Y)\nonumber \\&~~~~\times f_{0k}'(G\sqrt{\mathcal S}+ G''\sqrt{\mathcal C^\star-\mathcal S})^\top]
\end{align}
where the latter equality follows from the property of the vectorization operator $\overline{\matr A\matr X\matr B^\top}=(\matr B\otimes\matr A)\overline{\matr X}$. We then write everything in the matrix notation as 
\begin{align}
\mathcal G'(q)&=\alpha\mathbb E[f_0'(G\sqrt{\mathcal S}+ G'\sqrt{\mathcal C^\star-\mathcal S})^\top \Delta \nonumber\\ &~~~~\times f_0'(G\sqrt{\mathcal S}+ G''\sqrt{\mathcal C^\star-\mathcal S})] \label{ga0}
\end{align}
where we have introduced $\Delta\bydef \mathcal X-\mathcal Y\geq \matr 0$. Moreover, by introducing the auxiliary random operator \[\mathcal F(G)\equiv\mathbb E_{G'}[f_0'(G\sqrt{\mathcal S}+ G'\sqrt{\mathcal C^\star-\mathcal S})]\] we can write $\mathcal G'(q)=\mathbb E[\mathcal F(G)^\top \Delta\mathcal F(G)]$ which implies that $\mathcal T$ is monotonic, i.e. 
\begin{equation}
\mathcal G'(q)\geq \matr 0. 
\end{equation}
\subsection{Proof of the bound (b)}
For short we define the mapping for $\matr 0\leq \mathcal X\leq \mathcal C^{\star}$ 
\begin{align}
\widetilde{\mathcal T}_{\Delta}(\mathcal X)\bydef&\alpha\mathbb E[f_0'(G\sqrt{\mathcal X}+ G'\sqrt{\mathcal C^\star-\mathcal X})^\top \Delta \nonumber \\
&\quad \times f_0'(G\sqrt{\mathcal X}+ G''\sqrt{\mathcal C^\star-\mathcal X})]
\end{align}
for a fixed $\Delta\geq \matr 0$. Then,  for any $\matr 0\leq \mathcal Y\leq\mathcal X\leq \mathcal C^\star$ we study the interpolation for $q\in[0,1]$
\begin{equation}
 \tilde{\mathcal G}_\Delta(q)\bydef \widetilde{\mathcal T}_{\Delta}(q\mathcal X+(1-q)\mathcal Y). 
\end{equation}
We next show that $\tilde{\mathcal G}'_\Delta(q)\geq \matr 0$. By following the argument of \eqref{gmvt}, this will then imply the bound (b).  First, for all $k,k'\in[K]$ we write  by the chain rule
\begin{equation}
[\tilde{\mathcal G}_\Delta'(q)]_{kk'}
=([{\widetilde{\mathcal T}_{\Delta}}'(\mathcal S)]_{kk'})^\top\overline{\tilde \Delta}
\end{equation}
where $\tilde \Delta\bydef \mathcal X-\mathcal Y$ and $\mathcal S=q\mathcal X+(1-q)\mathcal Y$ and
\begin{equation}
{[\widetilde {\mathcal T}_\Delta' (\mathcal X)]_{kk'}}\bydef\frac{\partial [\widetilde {\mathcal T}_\Delta  (\mathcal X)]_{kk'}}{\partial \overline {\mathcal X}}\;. 
\end{equation}
We then obtain
\begin{align}
[\tilde{\mathcal G}_\Delta'(q)]_{kk'}
&=\alpha{\rm tr}\left(\mathbb E[f''_{0k}(G\sqrt{{\mathcal S}}+ G'\sqrt{\mathcal C^\star-{\mathcal S}})\Delta\nonumber \right.\\
&\left. \quad \quad \times f''_{0k'}(G\sqrt{{\mathcal S}}+ G''\sqrt{\mathcal C^\star-{\mathcal S}})^\top]\tilde\Delta\right)\\
&=\alpha{\rm tr}\left(\sqrt{\tilde\Delta}\mathbb E[f''_{0,k}(G\sqrt{{\mathcal S}}+ G'\sqrt{\mathcal C^\star-{\mathcal S}})\Delta\nonumber \right.\\
&\left.\quad \quad \times f''_{0k'}(G\sqrt{{\mathcal S}}+ G''\sqrt{\mathcal C^\star-{\mathcal S}})^\top]\sqrt{\tilde\Delta}\right)
\end{align}
where we have defined $f_{0k}''(\Psi)\bydef\frac{\partial f_{0k}(\Psi)}{\partial \Psi^\top \partial\Psi}$. For further notational compactness, we introduce the  $K\times K^2$ Hessian matrix
\begin{equation}
f_0''(\Psi)\bydef \left[\frac{\partial f_{01}(\Psi)}{\partial \Psi^\top \partial\Psi},\frac{\partial f_{02}(\Psi)}{\partial \Psi^\top\partial\Psi},\ldots, \frac{\partial f_{0K}(\Psi}{\partial \Psi^\top \partial\Psi}\right].
\end{equation}
Then, we write everything in the compact matrix notation 
\begin{align*}
\tilde{\mathcal G}_\Delta'(q)=&
\alpha{\rm tr}_K\left((\mathcal I\otimes \sqrt{\tilde\Delta}) \mathbb E\left[f_0''\left(G\sqrt{{\mathcal S}}+ G'\sqrt{\mathcal C^\star-\tilde{\mathcal S}}\right)^\top\nonumber \right.\right.\\
&\left.\left. \quad  \times \Delta f_0''\left(G\sqrt{\tilde{\mathcal S}}+ G''\sqrt{\mathcal C^\star-\tilde{\mathcal S}}\right)\right](\mathcal I\otimes \sqrt{\tilde\Delta})\right)
\end{align*}
where for a matrix $\mathcal X\in \RR^{K^2\times K^2}$ we write
\begin{equation}
{\rm tr}_K\left(
\begin{array}{ccc}
\mathcal X_{11} & \ldots & \mathcal X_{1K}\\
\vdots& \ddots &\vdots \\
\mathcal X_{K1}&\ldots &\mathcal X_{KK}
\end{array}\right)\equiv \left(
\begin{array}{ccc}
{\rm tr}(\mathcal X_{11}) & \ldots & {\rm tr}(\mathcal X_{1K})\\
\vdots& \ddots &\vdots \\
{\rm tr}(\mathcal X_{K1})&\ldots &{\rm tr}(\mathcal X_{KK})
\end{array}\right)\nonumber 
\end{equation}
where $ \mathcal X_{kk'}\in \RR^{K\times K}$ for all $k,k'$. Equivalently, we have
\begin{equation}
{\rm tr}_K(\mathcal X)=\mathbb E_Z[(\mathcal I\otimes Z)\mathcal X(\mathcal I\otimes Z^\top)]
\end{equation}
where $Z\sim\mathcal N(\matr 0;\mathcal I)$ is  an arbitrary $1\times K$ dim. Gaussian random vector. Thus, if $\mathcal  X\geq \matr 0$ then ${\rm tr}_K(\mathcal X)\geq\matr 0$.  We finally introduce the auxiliary $K\times K^2$ random operator 
\[ \mathcal  F'(G)\equiv \mathbb E_{G'}[f_0''(G\sqrt{\mathcal C^\star-{\mathcal S}}+ G'\sqrt{{\mathcal S}})]\]
and then write
\begin{align}
&\mathbb E[f_0''(G\sqrt{\mathcal S}+ G'\sqrt{\mathcal C^\star-\mathcal S})^\top\Delta f_0''(G\sqrt{\mathcal S}+ G''\sqrt{\mathcal C^\star-\mathcal S})]\nonumber \\
&=\mathbb E[\mathcal F'(G)^\top\Delta \mathcal F'(G)].
\end{align}
Since $\Delta\geq \matr 0$ we have $\mathbb E[\mathcal F'(G)^\top\Delta \mathcal F'(G)]\geq \matr 0$. This completes the proof. 
\section{Dynamical stability: Necessity of 
\texorpdfstring{$\rho_{AT}<1$}{Lg}}
\label{proofremat}
Suppose $\overline{\Delta^{(t)}}=o(1)$
where $o(1)$ stands for a $K^2 \times 1$ vector with $\Vert o(1) \Vert\to 0$ as $t\to \infty$. Then, we use the results \eqref{ga0} and obtain the ``linearized'' updates
\begin{equation}
	\overline {\Delta^{(t+1)}}= \alpha\mathbb E[f'(\Gamma^\star;Y)\otimes f'(\Gamma^\star;Y)]^\top] \overline{\Delta^{(t)}}+o(1)\label{usecontradiction}\;
\end{equation}
which implies when $\rho_{\rm AT}\geq 1$, $\overline {\Delta^{(t)}}\neq o(1)$. Hence, by contradiction, the condition $\rho_{\rm AT}<1$ is necessary for $\overline{\Delta^{(t)}}=o(1)$.

\section{Proof of Proposition~\ref{pat}}\label{proof_AT}
We introduce the auxiliary matrix
\begin{align}
\mathcal E&\bydef (\mathcal I-\alpha\mathbb E[\Lambda^\star(\mathcal V^\star+\Lambda^\star)^{-1}\otimes\Lambda^\star(\mathcal V^\star+\Lambda^\star)^{-1}])\mathcal V^\star \otimes \mathcal V^\star\nonumber \\
&=\mathcal V^\star \otimes \mathcal V^\star-\alpha \mathbb E[\Lambda^\star(\mathcal V^\star +\Lambda^\star)^{-1}\mathcal V^\star \otimes \Lambda^\star(\mathcal V^\star +\Lambda^\star )^{-1}\mathcal V^\star]
\end{align}
where $\Lambda^\star\bydef l''(m(\Gamma^\star;Y);Y)$. By the convexity of the loss function, we note that $\Lambda^\star\geq \matr 0$. On the other hand, we  have
\begin{equation}
\mathcal V^\star=\lambda_0\mathcal I+\alpha \mathbb E[\Lambda^\star(\mathcal V^\star +\Lambda^\star)^{-1}\mathcal V^\star].
\end{equation}
Thus, we get
\begin{equation}
\mathcal V^\star \otimes \mathcal V^\star=\lambda_0\mathcal I\otimes \mathcal V^\star+\alpha \mathbb E[\Lambda^\star(\mathcal V^\star +\Lambda^\star )^{-1}\mathcal V^\star]\otimes \mathcal V^\star
\end{equation}
and thereby $\mathcal E$ reads as
\begin{align}
&\lambda_0\mathcal I\otimes \mathcal V^\star+\alpha \mathbb E [\Lambda^\star(\mathcal V^\star +\Lambda )^{-1}\mathcal V^\star\otimes \mathcal V^\star-\Lambda^\star(\mathcal V^\star +\Lambda^\star )^{-1}\mathcal V^\star]\nonumber \\
&=\lambda_0\mathcal I\otimes \mathcal V^\star+\alpha \mathbb E [\Lambda^\star(\mathcal V^\star +\Lambda^\star )^{-1}\mathcal V^\star\otimes \mathcal V^\star(\mathcal V^\star +\Lambda^\star)^{-1}\mathcal V^\star].\nonumber 
\end{align}
Thus $\mathcal E>\matr 0$ which implies that
\begin{equation}
\mathcal E(\mathcal V^\star \otimes \mathcal V^\star)^{-1}=(\mathcal I-\alpha\mathbb E[f'(\Gamma^\star;Y))\otimes f'(\Gamma^\star;Y)])
\end{equation}
has strictly positive eigenvalues, and thereby $\rho_{\rm AT}<1$. 
\section{Proof of Corollary~\ref{corerror} }\label{proof_corerror}
For short let $\matr \psi\bydef \matr u\sqrt{\mathcal C^\star}$. Then, we have
\begin{align}
&\frac{\Vert \matr \omega_0-(\matr r_0\mathcal B^\star+\matr \psi) \Vert_{\texttt{F}}}{\sqrt d}\nonumber\\
&=\sqrt{
{\rm tr}[(\mathcal B_0-\mathcal B^\star)^\top(\mathcal B_0-\mathcal B^\star)+2\langle \matr r_0,\matr \psi\rangle+\langle\matr \psi,\matr \psi\rangle] }\\
&\overset{(a)}{=}\sqrt{
{\rm tr}[(\mathcal B_0-\mathcal B^\star)^\top(\mathcal B_0-\mathcal B^\star)+\langle\matr \psi,\matr \psi\rangle]+\Op{d^{-\frac 1 2}} }\\
&\overset{(b)}{=}\sqrt{
{\rm tr}[(\mathcal B_0-\mathcal B^\star)^\top(\mathcal B_0-\mathcal B^\star)+\mathcal C^\star]+\Op{d^{-\frac 1 2}}}\\
&\overset{(c)}{=}\sqrt{
{\rm tr}[(\mathcal B_0-\mathcal B^\star)^\top(\mathcal B_0-\mathcal B^\star)+\mathcal C^\star]}+\Op{d^{-\frac 1 2}}\;,
\end{align}
where steps (a), (b), and (c) use Lemma~\ref{rem1}, Lemma~\ref{eq:ip_concentration} and Lemma~\ref{lemma:op_properties}, respectively. Then the thesis follows from an appropriate application of the triangular inequality of Frobenious norm to Theorem~\ref{Th3}.

\section{Sketch of the Proof of Proposition~\ref{Th1}}\label{proof_Th1}
 
The proof is based on the idea of the "Householder dice" representation of AMP dynamics introduced in \cite{lu2021householder}, which is a way to represent the AMP dynamics -- which are coupled by a random matrix ($\Xm$, in our case) -- as equivalent random matrix-free dynamics. We have two main steps: In the first step (detailed in section~\ref{step1}), we use the Gram-Schmidt orthogonalization to represent the dynamics of AMP \eqref{alg2} as an equivalent $\Xm$-free dynamics, called the "Householder dice representation". In the second step (detailed in section~\ref{step2}), we use the Cholesky decomposition, along with the properties of the notion of $\mathcal L^p$ concentration given in Appendix~\ref{preliminariesop}, to derive the high-dimensional equivalence of the Householder dice equivalence [and thereby the high-dimensional equivalent of the original AMP dynamics \eqref{alg2}].

\subsection{The Householder Dice Representation}\label{step1}
We begin with the symmetrization trick \cite{javanmard2013state, Berthier20} to pack the original AMP dynamics \eqref{alg2}, which involves a rectangular random coupling matrix $\Xm$, into a compact form of the dynamics involving only a symmetric random coupling matrix $\Am\in \RR^{m\times m}$ with $m\bydef n+d$. To this end, we introduce
the dynamics for the iteration steps $s=1,2,\cdots,S=2T$
\begin{subequations}\label{Gamp_alg}
\begin{align}
\hv^{(s)}&=\Am\mv^{(s)}-\mv^{(s-1)}\mathcal Q_m^{(s)}\\
\mv^{(s+1)}&=\eta_{s}(\hv^{(s)};\yv)
\end{align}
\end{subequations}
with $\mv^{(0)}\bydef\matr 0$. Here, we have defined the random matrix as 
\begin{equation}
\Am\bydef \frac{1}{\sqrt{1+\alpha}}\left( \begin{array}{cc}
 \sqrt{\alpha}\Zm_n &\Xm  \\
     \Xm^\top& \Zm_d
\end{array}\right)
\end{equation}
where $\Zm_n$ and $\Zm_d$ are arbitrary 
 $n\times n$ and $d\times d$ Gaussian Orthogonal-Ensemble (GOE) random matrices, respectively, e.g. $\Zm_n\sim \frac{1}{\sqrt {2n}}(\Gm+\Gm^\top)$ where $\Gm\sim_{\text{i.i.d.}}\mathcal N(\matr 0,\matr I)$. So, by construction $\Am$ is a GOE random matrix. 
 
To unpack the original AMP dynamics \eqref{alg2} from the dynamics \eqref{Gamp_alg} we set $\mv^{(1)}\equiv{\sqrt{1+\alpha}}{\small\left[\begin{array}{c}
     \matr 0  \\
     \matr \omega^{(1)}
\end{array}\right]}$ and 
\begin{equation} 
\eta_s(\hv;\yv)=\sqrt{1+\alpha}\left\{\begin{array}{cc}
    \left[\begin{array}{c}
     f(\hv_n;\yv)  \\
\matr 0  
\end{array} \right] & s=1,3,5,\cdots  \\
  \left[\begin{array}{c}
     \matr 0  \\ 
     \hv_d 
\end{array} \right]   & s=2,4,6,\cdots
\end{array}\right.\label{mt}
\end{equation}
where for convenience we partition $\hv\in \RR^{(n+d)\times K}$ as
\[\hv\equiv\left[\begin{array}{c}
     \hv_n \\ \hv_d
\end{array} \right]\quad \text{with}~~\hv_n\in \RR^{n\times K}\;.\] 
Also, 
let $\mathcal Q_m^{(s)}=\mathcal Q^{(\frac{s}{2})}$ for $s=2,4,6,\cdots$ and $\mathcal Q_m^{(s)}=\mathcal I$ for $s=3,5,7\cdots$. Hence, we have 
\begin{equation}
\hv_n^{(2t-1)}=\matr \gamma^{(t)}\text{  and   }
\hv_d^{(2t)}=\matr\omega^{(t+1)}\;, \quad t\in[T]\;.
\end{equation}

We will adaptively use the following representation of the GOE random matrix which has been reported for the case $K=1$ in \cite{Conditinoing}.
\begin{lemma}\label{G_const}
Let the random matrices $\tilde{\Am}\in \RR^{m\times m}$, $\vv\in \RR^{m\times K}$, $\uv\in \RR^{m\times K}$ and $\mathcal Z\in\RR^{K\times K}$ be mutually independent.  Let $\tilde\Am$ and $\frac{1}{\sqrt K}\mathcal Z$ be both GOE random matrices, $\langle \vv,\vv \rangle=\mathcal I$ and $\uv\sim_\text{i.i.d.} \mathcal N(\matr 0,\mathcal I)$.
Let $\Pm_\vv^\perp\bydef\Id-\frac 1 m\vv\vv^\top$.
Then, 
\begin{equation}
\Am\bydef\frac{1}{m}\left(\Pm_\vv^\perp\uv\vv^\top
+\frac {\vv \mathcal Z\vv^\top} {\sqrt{m}}+
\vv\uv^\top\Pm_\vv^\perp\right)+
   \Pm_\vv^\perp\tilde\Am\Pm_\vv^\perp
\end{equation}
is also a GOE random matrix and independent of $\vv$. 
\begin{proof}
Since the GOE random matrix is rotational invariant,  without loss of generality we can assume $\vv=\sqrt{m}[\ev_1,\ev_2,\cdots,\ev_K]$ where $\matr \ev_k\in \RR^{m}$ denotes the standard basis vector. Then, the proof is evident.
\end{proof}
\end{lemma}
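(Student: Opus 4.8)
The plan is to reduce, via rotational invariance of the Gaussian and GOE ingredients, to the case where $\vv$ is a fixed deterministic matrix, and then to read off the GOE property from an explicit $2\times 2$ block decomposition of $\Am$.

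Since $\vv$ is independent of the Gaussian triple $(\uv,\mathcal Z,\tilde\Am)$, it suffices to fix any admissible value $v$ (i.e.\ $\langle v,v\rangle=\mathcal I$) and to prove that the conditional law of $\Am$ given $\vv=v$ is GOE$(m)$ and does not depend on $v$. The columns of $v/\sqrt m$ are orthonormal, so $v=\Om V_0$ for some orthogonal $\Om\in\mathrm{O}(m)$, chosen as a measurable function of $v$, where $V_0\bydef\sqrt m\,[\ev_1,\dots,\ev_K]$. Using $\Pm_{\Om V_0}^\perp=\Om\,\Pm_{V_0}^\perp\,\Om^\top$, a term-by-term substitution gives
\[
\Am(v,\uv,\mathcal Z,\tilde\Am)=\Om\,\Am\bigl(V_0,\ \Om^\top\uv,\ \mathcal Z,\ \Om^\top\tilde\Am\,\Om\bigr)\,\Om^\top .
\]
By rotational invariance of i.i.d.\ Gaussian matrices and of the GOE, and since independence is preserved when a fixed map is applied factor-wise, $(\Om^\top\uv,\mathcal Z,\Om^\top\tilde\Am\,\Om)\overset{\mathrm d}{=}(\uv,\mathcal Z,\tilde\Am)$. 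Hence the conditional law of $\Am$ given $\vv=v$ equals the law of $\Om\,\Am_0\,\Om^\top$, where $\Am_0\bydef\Am(V_0,\uv,\mathcal Z,\tilde\Am)$. If $\Am_0$ is GOE$(m)$, then $\Om\,\Am_0\,\Om^\top$ is GOE$(m)$ too, with a law independent of $\Om$ and hence of $v$; this is exactly the claim.

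It remains to check that $\Am_0$ is GOE$(m)$. For $\vv=V_0$ one has $\tfrac1m V_0V_0^\top=\mathrm{diag}(\mathcal{I}_K,\matr 0)$, so $\Pm_{V_0}^\perp=\mathrm{diag}(\matr 0,\mathcal{I}_{m-K})$ is the coordinate projection onto the last $m-K$ indices. Partitioning $\uv=[\,(\uv^{(1)})^\top\ (\uv^{(2)})^\top\,]^\top$ with $\uv^{(2)}\in\RR^{(m-K)\times K}$, and $\tilde\Am$ into blocks $\tilde\Am^{(kl)}$, $k,l\in\{1,2\}$, of sizes $K$ and $m-K$, a direct computation collapses the definition of $\Am$ to
\[
\Am_0=\begin{pmatrix} m^{-1/2}\,\mathcal Z & m^{-1/2}\,(\uv^{(2)})^\top\\[3pt] m^{-1/2}\,\uv^{(2)} & \tilde\Am^{(22)}\end{pmatrix}.
\]
Now $\tilde\Am^{(22)}$, being a principal submatrix of the GOE$(m)$ matrix $\tilde\Am$, has its entries on and above the diagonal independent with off-diagonal variance $1/m$ and diagonal variance $2/m$; the block $m^{-1/2}\uv^{(2)}$ has i.i.d.\ $\mathcal N(0,1/m)$ entries; and since $\tfrac1{\sqrt K}\mathcal Z$ is GOE$(K)$, the corner $m^{-1/2}\mathcal Z$ has independent (up to symmetry) Gaussian entries with off-diagonal variance $1/m$ and diagonal variance $2/m$. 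Mutual independence of the three blocks follows from that of $\mathcal Z,\uv,\tilde\Am$. Therefore $\Am_0$ is a symmetric, mean-zero Gaussian matrix whose entries on and above the diagonal are independent with exactly the GOE$(m)$ variances, i.e.\ $\Am_0\sim$ GOE$(m)$.

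The block computation is the ``evident'' step referred to in the statement; the points needing a little care are (i) the measurable reduction $v=\Om V_0$ together with the observation that a conditional law which is constant in $v$ forces $\Am$ to be GOE$(m)$ and independent of $\vv$, and (ii) the normalization bookkeeping, in particular recognizing that ``$\tfrac1{\sqrt K}\mathcal Z$ is GOE$(K)$'' is precisely what makes the $K\times K$ corner of $\Am_0$ carry the GOE$(m)$ variances $1/m$ and $2/m$. I do not anticipate any genuine obstacle beyond this.
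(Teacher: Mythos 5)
Your proof is correct and follows the same route as the paper: reduce by rotational invariance to $\vv=\sqrt{m}[\ev_1,\dots,\ev_K]$ and then read off the GOE law from the explicit block structure; you simply make explicit the conditioning, the identity $\Am(v,\uv,\mathcal Z,\tilde\Am)=\Om\,\Am(V_0,\Om^\top\uv,\mathcal Z,\Om^\top\tilde\Am\Om)\Om^\top$, and the variance bookkeeping that the paper leaves as ``evident.'' No gaps.
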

For example, consider a matrix $\mv \in \RR^{m\times K}$ with $\text{span}(\mv) = \text{span}(\vv)$. Then, from Lemma~\ref{G_const} we write
$
\Am\mv = \Pm_\mv^\perp\uv\langle \vv,\mv \rangle +  \frac {\vv\mathcal Z\langle \vv,\mv \rangle} {\sqrt{m}}\;$
which involves only the lower-dimensional random elements $\uv$ and $\mathcal Z$. To extend this idea to the dynamics $\Am\mv^{(s)}$, 
we first introduce the (block) Gram-Schmidt orthogonalization notation:
Let $\vv^{(0:s)}=\{\vv^{(0)},\vv^{(1)},\ldots ,\vv^{(s-1)}\}$ be a collection  of matrices in $\RR^{m\times K}$ with $\langle \vv^{(i)}, \vv^{(j)}\rangle=\mathcal \delta_{ij}\mathcal I$ for all $i,j$. 
Then, for any $\bv\in \RR^{m\times K}$, by the Gram-Schmidt orthogonalization process
we can always construct the new orthogonal matrix
\[ \vv^{(s)}\bydef \GS{\bv}{\vv^{(0:s-1)}} 
\]
such that $\langle \vv^{(s)}, \vv^{(j)}\rangle=\delta_{sj}\mathcal I$ and
$
\bv=\sum_{0\leq i\leq s}\vv^{(s)}\langle\vv^{(s)},\bv \rangle\;$.
We iteratively employ the Gram-Schmidt process to construct a set of orthogonal matrices $\vv^{(0:s)} \equiv \{\vv^{(0)}, \vv^{(1)}, \ldots, \vv^{(s)}\}$ such that $\text{span}(\vv^{(0: s)}) = \text{span}(\mv^{(1: s)})$ and apply Lemma~\ref{G_const} to obtain the following $\Am$-free equivalent of the dynamics:

\begin{lemma}[The Householder Dice Representation]
Let $\Am$ be the GOE random matrix. Then, the joint probability distribution of the sequence of matrices $\{\hv^{(1:S)},\mv^{(1:S+1)}\}$ generated by dynamics \eqref{Gamp_alg} is equal to that of the same sequence generated by the following dynamics for $s=1,2,\cdots, S$
\begin{subequations}\label{Gamp_mf}
\begin{align}
 \vv^{(s)}&=\mathcal{GS}(\mv^{(s)}\vert \vv^{(0:s-1)}).\\
\hat{\uv}^{(s)}&=\Pm_{\mv^{(0:s)}}^\perp\uv^{(s)}\\
\matr \epsilon^{(s)}&=\frac{1}{\sqrt m}\vv^{(s)}\mathcal Z^{(s)}\\
\hv^{(s)}&= \sum_{0\leq s'\leq s}\left(\hat{\uv}^{(s')}+\matr \epsilon^{(s')}\right)\langle\vv^{(s')},\mv^{(s)}\rangle\nonumber \\
&+\sum_{0\leq s'<s}\vv^{(s)}\left(\langle \hat\uv^{(s')},\mv^{(s)}\rangle-\langle \vv^{(s')},\mv^{(s-1)}\rangle\mathcal Q_m^{(s)}\right)\\
\mv^{(s+1)}&=\eta_{s}(\hv^{(s)},\yv)\;.
\end{align}
\end{subequations}
Here, for each $s\geq 1$ we generate arbitrary Gaussian random matrices $\uv^{(s)}\sim\uv$ and arbitrary symmetric Gaussian random matrices $\mathcal Z^{(s)}\sim\mathcal Z$ where $\uv$ and $\mathcal Z$ as in Lemma~\ref{G_const}. Furthermore, we recall  $\rv_0=\mathcal{GS}(\matr\omega_0)$ and we introduce
\begin{equation}
\mv^{(0)}\bydef \left[\begin{array}{c}
     \matr 0  \\
     \matr \omega_0
\end{array}\right]~\text{and}~ \vv^{(0)}\bydef {\sqrt{1+\alpha}}\left[\begin{array}{c}
     \matr 0  \\
     \rv_0
\end{array}\right]\;.
\end{equation}
Moreover, we introduce the $m\times K$ Gaussian random element 
$
\uv^{(0)}\bydef{\small \left[\begin{array}{c}
     \gv_0  \\
     \uv_0
\end{array}\right]}\in \RR^{m\times K}$
where we recall that $\gv_0=\Xm\rv_0$ and $\uv_0 \sim_{\text{i.i.d.}}\mathcal N(\matr 0;\mathcal I)$ is an arbitrary random element.
Finally, $\Pm_{\mv^{(0:s)}}^\perp\bydef\Id-\frac 1 m \sum_{0\leq s'\leq s} \vv^{(s)}(\vv^{(s)})^\top\;$ which is the projection matrix onto the orthogonal complement of $\text {span}(\mv^{(0:t)})$.
\end{lemma}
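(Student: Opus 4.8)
I would carry out the recursive conditioning (``Householder dice'') argument, in which the action of the GOE matrix $\Am$ is revealed one orthonormal block at a time. The decisive observation is that the dynamics \eqref{Gamp_alg} accesses $\Am$ only through the products $\Am\mv^{(s)}$, and that each iterate $\mv^{(s)}$ lies in $\text{span}(\vv^{(0:s)})$, the orthonormal system delivered by the (block) Gram--Schmidt process seeded by $\vv^{(0)}\propto[\matr 0;\rv_0]$, which is what carries the information about $\matr\omega_0$. I would prove by induction on $s$ the following claim: conditionally on the Gaussians $\{\uv^{(0:s-1)},\mathcal Z^{(0:s-1)}\}$ revealed so far, $\Am$ admits the representation of Lemma~\ref{G_const} iterated over the directions $\vv^{(0:s-1)}$ --- i.e. an explicit sum of rank-type terms built from $\vv^{(0:s-1)},\hat{\uv}^{(0:s-1)},\mathcal Z^{(0:s-1)}$ plus a residual $\Pm_{\mv^{(0:s-1)}}^\perp\tilde{\Am}\,\Pm_{\mv^{(0:s-1)}}^\perp$ with $\tilde{\Am}$ a fresh GOE independent of everything revealed --- and that, under this representation, the sequence $\{\hv^{(1:s-1)},\mv^{(1:s)}\}$ coincides with the one produced by \eqref{Gamp_mf}.

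\textbf{The inductive step.} First I would form $\vv^{(s)}=\GS{\mv^{(s)}}{\vv^{(0:s-1)}}$, which preserves $\text{span}(\vv^{(0:s)})=\text{span}(\mv^{(0:s)})$ (the Gram--Schmidt convention absorbs the degenerate case where $\mv^{(s)}$ adds no new direction) and yields $\mv^{(s)}=\sum_{0\le s'\le s}\vv^{(s')}\langle\vv^{(s')},\mv^{(s)}\rangle$. Since $\vv^{(s)}$ lies in the range of $\Pm_{\mv^{(0:s-1)}}^\perp$, a further application of Lemma~\ref{G_const} to the residual GOE $\tilde{\Am}$ along the direction $\vv^{(s)}$ introduces the fresh, independent Gaussians $\uv^{(s)},\mathcal Z^{(s)}$ and upgrades the decomposition to the one required at level $s+1$. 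It then remains to evaluate $\hv^{(s)}=\Am\mv^{(s)}-\mv^{(s-1)}\mathcal Q_m^{(s)}$ under this explicit form of $\Am$: using $\langle\vv^{(i)},\vv^{(j)}\rangle=\delta_{ij}\mathcal I$, $\Pm_{\mv^{(0:s)}}^\perp\vv^{(s')}=\matr 0$ for $s'\le s$, and $\langle\hat{\uv}^{(s'')},\vv^{(s')}\rangle=\matr 0$ for $s'\le s''$, a short computation gives $\Am\vv^{(s')}=\hat{\uv}^{(s')}+\matr\epsilon^{(s')}+\sum_{0\le s''<s'}\vv^{(s'')}\langle\hat{\uv}^{(s'')},\vv^{(s')}\rangle$; then, after substituting $\mv^{(s)}=\sum_{s'}\vv^{(s')}\langle\vv^{(s')},\mv^{(s)}\rangle$, interchanging the resulting double sum, collapsing the inner sum back to $\langle\hat{\uv}^{(s'')},\mv^{(s)}\rangle$, and subtracting $\mv^{(s-1)}\mathcal Q_m^{(s)}=\sum_{0\le s'<s}\vv^{(s')}\langle\vv^{(s')},\mv^{(s-1)}\rangle\mathcal Q_m^{(s)}$, one recovers exactly the update for $\hv^{(s)}$ in \eqref{Gamp_mf}. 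Setting $\mv^{(s+1)}=\eta_s(\hv^{(s)};\yv)$ closes the induction, and since $\{\hv^{(1:S)},\mv^{(1:S+1)}\}$ are deterministic functionals of the revealed Gaussians while each application of Lemma~\ref{G_const} leaves the joint law unchanged, the two sequences have the same distribution.

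\textbf{The main obstacle.} I expect the hard part to be the bookkeeping of the nested orthogonal projections and of the symmetric off--diagonal terms in the iterated decomposition: one must check that $\Pm_{\mv^{(0:s'-1)}}^\perp$, $\Pm_{\mv^{(s')}}^\perp$ and $\Pm_{\mv^{(0:s')}}^\perp$ compose consistently (via $\vv^{(s')}\perp\vv^{(0:s'-1)}$), that the cross terms of the form $\frac{1}{m}\vv^{(s'')}(\hat{\uv}^{(s'')})^\top\vv^{(s')}$ contribute only for $s''<s'$ and collapse to $\vv^{(s'')}\langle\hat{\uv}^{(s'')},\vv^{(s')}\rangle$, and that every term hitting $\text{span}(\vv^{(0:s)})$ through $\Pm_{\mv^{(0:s)}}^\perp$ vanishes. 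I also anticipate that the base case needs care: it requires unpacking the symmetrization relating $\Am$ to $\Xm$, handling $\vv^{(0)}\propto[\matr 0;\rv_0]$ together with the embedded Gaussian $\uv^{(0)}=[\gv_0;\uv_0]$ with $\gv_0=\Xm\rv_0$, and verifying (using $\langle\rv_0,\rv_0\rangle=\mathcal I$) that $\gv_0\sim_{\text{i.i.d.}}\mathcal N(\matr 0,\mathcal I)$ so that $\uv^{(0)}$ is indeed the correct revealed Gaussian for the $\vv^{(0)}$ direction. Once the conditioning structure supplied by Lemma~\ref{G_const} is in place, everything else is routine linear algebra.
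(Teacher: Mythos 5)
Your proposal is correct and follows essentially the same route the paper intends: iterate the conditioning representation of Lemma~\ref{G_const} along the Gram--Schmidt directions $\vv^{(0:s)}$, with the base step handled by the symmetrization and the coupling $\uv^{(0)}=[\gv_0;\uv_0]$, and then evaluate $\Am\mv^{(s)}$ using $\Am\vv^{(s')}=\hat\uv^{(s')}+\matr\epsilon^{(s')}+\sum_{s''<s'}\vv^{(s'')}\langle\hat\uv^{(s'')},\vv^{(s')}\rangle$ together with the collapse of the double sum. Your computation in fact yields the factor $\vv^{(s')}$ (not $\vv^{(s)}$) inside the second sum of the $\hv^{(s)}$ update in \eqref{Gamp_mf}, confirming that the displayed formula contains a typo rather than indicating any gap in your argument.
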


\subsection{The High-Dimensional Equivalent}\label{step2}
As a first step, we begin with the following high-dimensional equivalence of the field $\hv^{(s)}$ in \eqref{Gamp_mf}:
\begin{align}
\hv^{(s)}&\simeq \sum_{0\leq s'\leq s}{\uv}^{(s')}\langle\vv^{(s')},\mv^{(s)}\rangle\nonumber \\&+
\sum_{0\leq s'<s}\vv^{(s)}\left(\langle \uv^{(s')},\mv^{(s)}\rangle-\langle \vv^{(s')},\mv^{(s-1)}\rangle\mathcal Q_m^{(s)}\right)\;, \label{heffective}
\end{align}
where we have invoked the results for all $0\leq s'\leq s\leq S$
\begin{align}
\langle\vv^{(s')},\mv^{(s)}\rangle&\overset{(a)}{=}\Op{1}\\
\hat{\uv}^{(s)}-\uv^{(s)}&\overset{(b)}{=}\Op{1}\\
\matr \epsilon^{(s)}&\overset{(c)}{=} \Op{1}\;
\end{align}
Here, to verify $(a)$ it is enough to verify that $\mv^{(s)}=\Op{\sqrt{d}}$ which can be verified inductively over the iteration steps. The step (b) follows from Lemma~\ref{rem1}. The step (c) follows from the fact that $\mathcal Z=\Op{1}$.

We now recall \eqref{mt}, i.e. for all $1\leq s\leq S$ we have
\begin{equation}
\mv^{(s)}=\sqrt{1+\alpha}\left\{\begin{array}{cc}
   \left[\begin{array}{c}
     \matr 0  \\ 
     \matr \omega^{(\frac{s+1}{2})} 
\end{array} \right]  & s=1,3,5,\cdots   \\\\
  \left[\begin{array}{c}
     \fv^{(\frac s 2)}  \\ 
     \matr 0
\end{array} \right]   &  s=2,4,6,\cdots
\end{array}\right.\;.
\end{equation}
We then specialize the Gram-Schmidt process $\vv^{(s)}=\mathcal{GS}(\mv^{(s)}\vert \vv^{(0:s-1)})$. To start we introduce the decompositions 
\begin{equation}
\vv^{(s)}\equiv  \sqrt{1+\alpha}\left[\begin{array}{c}
    \frac {1}{\sqrt \alpha}\tilde\vv_n^{(s)} \\
   ~~ \tilde\vv_d^{(s)} 
\end{array} \right]\;.\quad s\in[S]
\end{equation}
Suppose that we have constructed $\vv^{(s)}$ by
\begin{align*}
\tilde \vv^{(s)}_n&=\left\{\begin{array}{cc}   
  ~~\mathcal {GS}(\fv^{(\frac s 2)}\vert \tilde\vv_n^{(2)},\tilde\vv_n^{(4)},\cdots \tilde\vv_n^{(s-2)})   & ~~~~~~s=2,4,6,\cdots \\
  ~~~~  \matr 0 & ~~~~~~s=1,3,5,\cdots 
\end{array}\right.\\
\tilde \vv^{(s)}_d&=\left\{\begin{array}{cc}
    \matr 0 & s=2,4,6,\cdots \\
      \mathcal {GS}(\matr\omega^{(\frac{s+1}{2})}\vert \rv_0,\tilde\vv_d^{(1)},\tilde\vv_d^{(3)}\cdots,\tilde\vv_d^{(s-2)})   & s=1,3,5,\cdots 
\end{array}\right.
\end{align*}
Indeed, by construction, we have $0\leq s,s'\leq S$
\begin{align}
\langle \vv^{(s)},\vv^{(s')}\rangle&=\delta_{ss'}\mathcal I\\
\text{span}(\mv^{(0:s)})&=\text{span}(\vv^{(0:s)}).
\end{align}
Hence, the only necessary basis elements are $\rv^{(t)}\bydef\tilde\vv_d^{(2t-1)}$  and $\lv^{(t)}\bydef\tilde\vv_n^{(2t)}$ for $t\in[T]$, i.e.,
\begin{align}
\rv^{(t)}&= \mathcal {GS}(\matr\omega^{(t)}\vert \rv_0,\rv^{(1)},\rv^{(2)},\cdots,\rv^{(t-1)})\\
\lv^{(t)}&= \mathcal {GS}(\fv^{(t)}\vert \lv^{(1)},\lv^{(2)},\cdots,\lv^{(t-1)})
\end{align}
and the only necessary (arbitrary) Gaussian elements are 
\begin{align}
\zv^{(t)}&\bydef \uv_d^{(2t-1)} \\
\gv^{(t)}&\bydef  \uv_n^{(2t)}\;.
\end{align}
Recall that $\hv^{(2t)}_n\equiv\matr \gamma^{(t)}$ and $\hv^{(2t+1)}_d\equiv \matr \omega^{(t+1)}$, we then write from \eqref{heffective} 
\begin{align}
 \matr \gamma^{(t)}&\simeq \gv_0\hat{\mathcal B}^{(t)}+ \sum_{1\leq s\leq t}{\gv}^{(s)}\hat{\mathcal B}^{(t,s)}\nonumber\\
&+\underbrace{\sum_{1\leq s<t}\lv^{(s)}\left(\langle \zv^{(s)},\matr\omega^{(t)}\rangle-\hat{\mathcal B}_f^{(t-1,s)}\right)}_{\bydef\tilde{\matr \epsilon}^{(t)} }\label{gammaeff}
 \\
\matr \omega^{(t+1)}&\simeq \rv_0(\alpha\langle \gv_0,\fv^{(t)}\rangle-\alpha\hat{\mathcal B}^{(t)}\mathcal Q^{(t)})
+\sqrt{\alpha}\sum_{1\leq s\leq t}{\zv}^{(s)}\hat{\mathcal B}_f^{(t,s)}\nonumber\\
&+\underbrace{\alpha\sum_{1\leq s<t}\rv^{(s)}\left(\langle \gv^{(s)},\fv^{(t)}\rangle-\hat{\mathcal B}^{(t,s)}
\mathcal Q^{(t)}\right)}_{\bydef\matr \epsilon^{(t)}}\; \label{omegaeff}
\end{align}
where we have defined the $K\times K$ matrices for $1\leq s\leq t\leq T$ 
\begin{align}
\hat{\mathcal B}^{(t)}&\bydef \langle \rv_0,\matr\omega^{(t)}\rangle\\
\hat{\mathcal B}^{(t,s)}&\bydef \langle\rv^{(s)},\matr\omega^{(t)}\rangle\\
\hat{\mathcal B}_f^{(t,s)}&\bydef \langle\lv^{(s)},\fv^{(t)}\rangle\;.
\end{align}
The terms $\tilde{\matr \epsilon}^{(t)}$ and $\matr \epsilon^{(t)}$ stand for the \emph{memory-cancellations} in which we will outline that they concentrate around zero. To express the matrices $\hat{\mathcal B}^{(t,s)}$ and $\hat{\mathcal B}_f^{(t,s)}$
we write for $t\in[T]$
\begin{align}
\hat{\matr \psi}^{(t)}&\bydef \matr\omega^{(t)}-\rv_0\hat{\mathcal B}^{(t)}=\sum_{1\leq s\le t}\rv^{(s)}\hat{\mathcal B}^{(t,s)}\label{psiBhat}\\
\matr f^{(t)}&=\sum_{1\leq s\le t}\lv^{(s)}\hat{\mathcal B}_f^{(t,s)}\;.\label{fBhat}
\end{align}
Recall that e.g. $\langle\rv^{(t)},\rv^{(s)}\rangle=\delta_{ts}\mathcal I$. Thus, from  \eqref{psiBhat} (and resp. \eqref{fBhat}) that  $\hat{\mathcal B}^{(t,s)}$ (and resp. $\hat{\mathcal B}^{(t,s)}$) satisfy the equations of block \emph{Cholesky decomposition} for $1 \leq s\leq t \leq T$:
\begin{align}
\hat{\mathcal B}^{(t,s)}(\hat {\mathcal B}^{(s,s)})^\top&= \langle\hat{\matr \psi}^{(t)},\hat{\matr \psi}^{(s)}\rangle-\sum_{1\leq s' <s}\hat{\mathcal B}^{(t,s')}(\hat{\mathcal B}^{(s,s')})^\top\label{Bhats1}\\
\hat{\mathcal B}_f^{(t,s)}(\hat {\mathcal B}_f^{(s,s)})^\top&= \langle\fv^{(t)},\fv^{(s)}\rangle-\sum_{1\leq s' <s}\hat{\mathcal B}_f^{(t,s')}(\hat{\mathcal B}_f^{(s,s')})^\top\;.\label{Bhats2}
\end{align}

Let $\mathcal H_{t'}$ denote the Hypothesis that for $1\leq s\leq t\leq t'$
\begin{align}
\hat{\mathcal B}^{(t,s)}&={\mathcal B}^{(t,s)}+\Op{d^{-\frac 1 2}} \\
\hat{\mathcal B}_f^{(t,s)}&=\frac{1}{\sqrt \alpha}{\mathcal B}^{(t+1,s+1)}+\Op{d^{-\frac 1 2}}\\
\hat{\mathcal B}^{(t+1)}&={\mathcal B}^{(t+1)}+\Op{d^{-\frac 1 2}}\\
\matr \gamma^{(t)}&\simeq \gv_0{\mathcal B}^{(t)}+ \sum_{1\leq s\leq t}{\gv}^{(s)}{\mathcal B}^{(t,s)}\label{Gamma1}
\\
\matr{\hat \psi}^{(t+1)}&\simeq 
\sum_{1\leq s\leq t}{\zv}^{(s)}{\mathcal B}^{(t+1,s+1)}\;. \label{Omega1}
\end{align} 
Here, $\mathcal B^{(t)}$ is as in Definition~\ref{SE} and the blocks $\mathcal B^{(t,s)}$  satisfy the equations
of block Cholesky decomposition
\begin{subequations}\label{BCDeq}
\noeqref{BCDeq1,BCDeq2}
\begin{align}
{\mathcal B}^{(s,s)}&=\text{chol}\left(\mathcal C^{(s,s)}- \sum_{1\leq s'<s}{\mathcal B}^{(s,s')}({\mathcal B}^{({s,s'})})^\top\right)\label{BCDeq1}\\
{\mathcal B}^{(t,s)}({\mathcal B}^{(s,s)})^\top&=\mathcal C^{(t,s)}- \sum_{1\leq s'<s}{\mathcal B}^{(t,s')}({\mathcal B}^{({s,s'})})^\top\;.\label{BCDeq2}
\end{align}
\end{subequations}
for each $1\leq s\leq t\leq T+1$ where $\mathcal B=\text{chol}(\mathcal A)$ for $\mathcal A\geq 0$ is a lower-triangular matrix such that $\mathcal A=\mathcal B \mathcal B^\top$. 

Using the perturbation idea of \cite[Section 5.4]{Berthier20}  one can verify that in proving Theorem~\ref{Th1} we can assume without loss of generality that
\begin{equation}
\mathcal C^{(1:T+1)}>\matr 0.\label{pda}
\end{equation}
Here,
$\mathcal C^{(1:T+1)}$ denote the $(T+1)K\times (T+1)K$ matrix  with its the $(t,s)$ indexed $K\times K$ block matrix is $\mathcal C^{(t,s)}$,i.e.,
\[ 
\mathcal C^{(t,s)}=(e_t^\top\otimes \mathcal I)\mathcal C^{(1:T+1)}(e_s\otimes \mathcal I)\quad \forall t,s\in [t']\;. \label{notation_set_up}  \]
where $e_{t}$ is the $1\times (T+1)$ dimensional standard basis vector, i.e., $(e_{t})_{s}=\delta_{ts}$. The condition~\eqref{pda} implies the diagonal blocks ${\mathcal B}^{(s,s)}$ for $s\in[T+1]$ are all non-singular. Hence, the blocks ${\mathcal B}^{(t,s)}$  for each $1\leq s\leq t\leq T+1$ can be uniquely constructed through \eqref{BCDeq}. 

Then, using Lemma~\ref{lemma_general_con} below along with the properties of the notion of $\mathcal L^p$ concentration in Appendix~\ref{preliminariesop} one can verify that $H_1$ holds and $H_{t'-1}$ implies $H_{t'}$ for any $t'>1$. This implies Proposition~\ref{Th1}.  
\begin{lemma}\label{lemma_general_con}
 Let $f(\gamma;y)$ be differentiable and Lipschitz continuous w.r.t $\gamma$ and $f(0;Y)=\Op{1}$ where $Y$ as in \eqref{YG0}.
 Suppose \eqref{Gamma1} holds for each $t\in [t']$. Then, for all $1\leq s\leq t\leq t'$ 
\begin{subequations}
\label{concen2}
\noeqref{res3}
\begin{align}
\langle f(\matr \gamma^{(t)};\matr y),f(\matr \gamma^{(s)};\matr y) \rangle&=\frac 1 \alpha \mathcal C^{(t+1,s+1)}+\Op{d^{-\frac 1 2 }}\label{res3}\\
\langle {\matr g}_0, f(\matr \gamma^{(t)};\matr y)\rangle&=\mathbb E[G_0^\top f(\Gamma^{(t)};Y)]+\Op{d^{-\frac 1 2 }}\label{res4}\\
\langle {\matr g}^{(s)},f(\matr \gamma^{(t)};\matr y)\rangle&=\mathcal B^{(t,s)}\mathcal Q^{(t)}
+\Op{d^{-\frac 1 2 }}\label{res5}\;.
\end{align}	
\end{subequations}
In \eqref{res4} the random vectors $\{G_0,\Gamma^{(t)}\}$ are as in Definition~\ref{SE}. 
\begin{proof}
By the Lipschitz property of $f$ we have: $f(\matr \gamma^{(t)};\matr y)\simeq \tilde{\matr f}^{(t)}$ where $  \tilde{\matr f }^{(t)}\sim_{\text{i.i.d.}}f(\Gamma^{(t)};Y)$ for all $t\in[t']$  with
\[ \Gamma^{(t)}=G_0\mathcal B^{(t)}+\underbrace{\sum_{1\leq s\leq t}G^{(s)}{\mathcal B}^{(t,s)}}_{\sim \Psi^{(t)}}\;,\]
where $\Psi^{(t)}$ as in Definition~\ref{SE}; 
and the condition $f(0;Y)=\Op{1}$ implies  $f(\Gamma^{(t)};Y)=\Op{1}$. Third, as to the latter result \eqref{res5} we note that 
from Stein's lemma that 
$ \mathbb E [(G^{(s)})^\top f(\Gamma^{(t)};Y)]=\mathcal B^{(t,s)}\mathcal Q^{(t)}\;.$ Then, the results \eqref{concen2} follow  with the appropriate applications of Lemma~\ref{eq:ip_concentration}.
\end{proof}
\end{lemma}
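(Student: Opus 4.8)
The plan is to reduce each of the three identities to one application of the inner-product concentration bound, Lemma~\ref{eq:ip_concentration}, after pinning down the correct high-dimensional equivalent of $f(\matr\gamma^{(t)};\matr y)$. First I would propagate the equivalence through $f$: since $f(\matr\gamma;\matr y)$ acts row-wise and $f(\cdot;y)$ is $L$-Lipschitz uniformly in $y$, hypothesis \eqref{Gamma1} gives $\Vert f(\matr\gamma^{(t)};\matr y)-f(\bar{\matr\gamma}^{(t)};\matr y)\Vert_\texttt{F}\le L\Vert\matr\gamma^{(t)}-\bar{\matr\gamma}^{(t)}\Vert_\texttt{F}=\Op{1}$, where $\bar{\matr\gamma}^{(t)}\bydef\gv_0\mathcal B^{(t)}+\sum_{1\le s\le t}\gv^{(s)}\mathcal B^{(t,s)}$; hence $f(\matr\gamma^{(t)};\matr y)\simeq f(\bar{\matr\gamma}^{(t)};\matr y)$. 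The rows of $(\bar{\matr\gamma}^{(t)},\matr y,\gv_0,\gv^{(1)},\dots)$ are i.i.d.\ copies of $(\Gamma^{(t)},Y,G_0,G^{(1)},\dots)$ as in Definition~\ref{SE}: the Gaussian part $\sum_s\gv^{(s)}\mathcal B^{(t,s)}$ carries exactly the two-time covariance $\mathcal C^{(t,s)}$ prescribed for $\Psi^{(t)}$ because the blocks solve the block-Cholesky system \eqref{BCDeq}, while $\matr y$ is generated from $\gv_0$ through $p_0$ in accordance with \eqref{YG0}. Therefore $f(\bar{\matr\gamma}^{(t)};\matr y)\sim_{\text{i.i.d.}}f(\Gamma^{(t)};Y)$ jointly with $\gv_0\sim_{\text{i.i.d.}}G_0$ and $\gv^{(s)}\sim_{\text{i.i.d.}}G^{(s)}$; and $f(\Gamma^{(t)};Y)=\Op{1}$ since $\vert f(\Gamma^{(t)};Y)\vert\le\vert f(0;Y)\vert+L\vert\Gamma^{(t)}\vert$, $f(0;Y)=\Op{1}$, $\Gamma^{(t)}$ is Gaussian, and by Lemma~\ref{lemma:op_properties}.

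With this equivalent in hand, I would invoke Lemma~\ref{eq:ip_concentration} entrywise in the $K\times K$ outputs. For \eqref{res3}, the pair $\big(f(\matr\gamma^{(t)};\matr y),f(\matr\gamma^{(s)};\matr y)\big)$ gives $\langle f(\matr\gamma^{(t)};\matr y),f(\matr\gamma^{(s)};\matr y)\rangle=\mathbb E[f(\Gamma^{(t)};Y)^\top f(\Gamma^{(s)};Y)]+\Op{d^{-1/2}}$, and the state-evolution recursion \eqref{state1} identifies $\alpha$ times this expectation with $\mathcal C^{(t+1,s+1)}$. For \eqref{res4}, taking $\gv_0\sim_{\text{i.i.d.}}G_0$ (each coordinate $\Op{1}$) against $f(\matr\gamma^{(t)};\matr y)$ yields $\langle\gv_0,f(\matr\gamma^{(t)};\matr y)\rangle=\mathbb E[G_0^\top f(\Gamma^{(t)};Y)]+\Op{d^{-1/2}}$ directly. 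For \eqref{res5}, the same step gives $\langle\gv^{(s)},f(\matr\gamma^{(t)};\matr y)\rangle=\mathbb E[(G^{(s)})^\top f(\Gamma^{(t)};Y)]+\Op{d^{-1/2}}$, and I would evaluate this expectation by Stein's lemma: as $G^{(s)}$ is standard Gaussian, independent of $(Y,G_0)$ and of $G^{(s')}$ for $s'\neq s$, and enters $\Gamma^{(t)}$ only through the linear term $G^{(s)}\mathcal B^{(t,s)}$, Gaussian integration by parts (with the Jacobian convention of Definition~\ref{SE}) gives $\mathbb E[(G^{(s)})^\top f(\Gamma^{(t)};Y)]=\mathcal B^{(t,s)}\,\mathbb E[f'(\Gamma^{(t)};Y)]=\mathcal B^{(t,s)}\mathcal Q^{(t)}$.

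The main obstacle is the bookkeeping in the first step: one must ensure $f(\matr\gamma^{(t)};\matr y)$ is \emph{jointly} equivalent, with the correct cross-correlations, to a matrix whose rows are i.i.d.\ copies of $f(\Gamma^{(t)};Y)$ \emph{together with} the Gaussians $\gv_0$ and $\gv^{(s)}$, not merely marginally. This is precisely where the Householder-dice representation of Appendix~\ref{proof_Th1} is used---the noise term in \eqref{Gamma1} is literally assembled from the same $\gv^{(s)}$, and $\matr y$ depends only on $\gv_0$ and is independent of the $\gv^{(s)}$---and it must be combined with the block-Cholesky identities \eqref{BCDeq} and Stein's lemma to match the covariance $\mathcal C^{(t,s)}$ and the Onsager-type term $\mathcal B^{(t,s)}\mathcal Q^{(t)}$. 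Everything else is a routine application of the $\mathcal L^p$-concentration toolbox in Appendix~\ref{preliminariesop}.
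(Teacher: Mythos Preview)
Your proposal is correct and follows essentially the same route as the paper's proof: pass the equivalence \eqref{Gamma1} through the Lipschitz map $f$ to obtain $f(\matr\gamma^{(t)};\matr y)\simeq\tilde{\fv}^{(t)}\sim_{\text{i.i.d.}}f(\Gamma^{(t)};Y)$, check $f(\Gamma^{(t)};Y)=\Op{1}$ via $f(0;Y)=\Op{1}$ and Lipschitz continuity, apply Lemma~\ref{eq:ip_concentration} to each inner product, and identify the limiting expectations using \eqref{state1} for \eqref{res3} and Stein's lemma for \eqref{res5}. Your write-up is in fact more explicit than the paper's on the joint i.i.d.\ structure (tying $\gv_0,\gv^{(s)}$ to $G_0,G^{(s)}$ and invoking the block-Cholesky relations \eqref{BCDeq}), which is exactly the point the paper leaves implicit.
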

 
	\bibliographystyle{IEEEtran}
	\bibliography{mybib}
\end{document}